  \providecommand\BibTeX{{%
    \normalfont B\kern-0.5em{\scshape i\kern-0.25em b}\kern-0.8em\TeX}}}
\renewcommand\footnotetextcopyrightpermission[1]{}
\newtheorem{assumption}{Assumption}
\newtheorem{theorem}{Theorem}
\newtheorem{lemma}{Lemma}
\begin{document}

\author{Dengke Yan}
\affiliation{%
  \institution{East China Normal University}
  \streetaddress{200062}
  \city{Shanghai}
  \country{China}
}

\author{Ming Hu}
\affiliation{%
  \institution{Nanyang Technological University}
  \city{Singapore}
  \country{Singapore}
}
\authornote{Corresponding authors}
\email{hu.ming.work@gmail.com}

\author{Zeke Xia}
\affiliation{%
  \institution{East China Normal University}
  \streetaddress{200062}
  \city{Shanghai}
  \country{China}
}

\author{Yanxin Yang}
\affiliation{%
  \institution{East China Normal University}
  \streetaddress{200062}
  \city{Shanghai}
  \country{China}
}

\author{Jun Xia}
\affiliation{%
  \institution{University of Notre Dame}
  \city{Notre Dame}
  \country{USA}
}

\author{Xiaofei Xie}
\affiliation{%
  \institution{Singapore Management University}
  \city{Singapore}
  \country{Singapore}
}

\author{Mingsong Chen}
\affiliation{%
  \institution{East China Normal University}
  \streetaddress{200062}
  \city{Shanghai}
  \country{China}
}
\authornotemark[1]
\email{mschen@sei.ecnu.edu.cn}

\title{
Have Your Cake and Eat It Too: Toward Efficient and Accurate Split Federated Learning 
}

\begin{abstract}
In resource-constrained AIoT systems, traditional Federated Learning (FL) approaches cannot deploy complete models on edge devices. Split Federated Learning (SFL) solves this problem by splitting the model into two parts and training one of them on edge devices.
However, due to data heterogeneity and stragglers, SFL suffers from the challenges of low inference accuracy and low efficiency.
To address these issues, this paper presents a novel SFL approach, named Sliding Split Federated Learning (S$^2$FL), which adopts an adaptive sliding model split strategy and a data balance-based training mechanism.
Specifically, S$^2$FL dynamically dispatches different model portions to AIoT devices according to their computing capability so that devices with different computational resources have approximate training times, thus alleviating the low training efficiency caused by stragglers.
By combining features uploaded by devices with different data distributions to generate multiple larger batches with a uniform distribution for back-propagation, S$^2$FL can alleviate the performance degradation caused by data heterogeneity.
Experimental results demonstrate that, compared to conventional SFL, S$^2$FL can achieve up to 16.5\% inference accuracy improvement and $3.54\times$ training acceleration.

\end{abstract}


\maketitle

\section{Introduction}
With the improvement of edge device computing capabilities, deep learning-based applications are popular in Artificial Intelligence of Things (AIoT) systems~\cite{chang2021survey,zhang2020empowering,li2021privacy,hu2023aiotml} (e.g., Traffic Management, Automated Driving, Smart Home, and Industrial Manufacturing). These AIoT devices process data received by sensors and make real-time decisions through deployed deep learning models.
As a promising distributed deep learning paradigm, Federated Learning (FL)~\cite{fedavg,li2020federated,hu2024fedmut,cui2022optimizing,li2023towards} has been widely used in AIoT applications~\cite{li2019smartpc,li2020deepfed,zhang2020efficient, guo2021lightfed, wang2019edge, hu2023gitfl}.
By dispatching a global model to massive AIoT devices for model training and aggregating the trained local models to generate a new global model, FL enables collaborative model training without directly obtaining the raw data of each device.
However, compared to the cloud server, the hardware resources on AIoT devices are still seriously limited, resulting in the fact that conventional FL cannot support large-size model training on local devices.

To address the above issue, Split Federated Learning (SFL)~\cite{thapa2022splitfed} has been presented for model training in resource-constrained scenarios, which splits part of the model training process performed on local devices to the cloud server. 
In specific, in SFL, the global model is split into two portions, i.e., the client-side model portion and the server-side model portion, where the client-side model portion is trained on local devices and the server-side model portion is trained on the cloud server. 
To prevent the risk of privacy leakage, SFL includes two servers, i.e., the {\it {\it Fed Server}} and the {\it {\it Main Server}}.
In each SFL training round, the {\it Fed Server} dispatches the client-side model portion to activated devices. 
Each device inputs its raw data to the client-side model portion and performs forward operation to obtain intermediate features and uploads the features with their corresponding data labels to the {\it Main Server}.
The {\it Main Server} generates multiple copies of the server-side model portion and uses received features to train these copies and performs backward operation to obtain the gradient of these features, where a copy is trained by features from a device, and the gradient will be dispatched to the corresponding device.
By aggregating all the trained copies, the {\it Main Server} can generate a new server-side model portion.
The devices use the received gradient to update their client-side model portion and upload the trained portion to the {\it Fed Server}. By aggregating trained client-side model portions, the {\it Fed Server} can generate a new client-side model portion.


Although SFL enables large-size model training among resource-constrained AIoT devices, it still encounters the problem of low inference accuracy due to data heterogeneity and the problem of low training efficiency caused by device heterogeneity.
Typically, since massive AIoT devices are located in different physical environments and have different data collection preferences, their local data are not Independent Identically Distributed (non-IID), resulting in the notorious ``client drift'' problem~\cite{karimireddy2020scaffold,gao2022feddc,hu2022fedcross}.
Specifically, non-IID data cause local models to be optimized towards different directions, which results in the low inference accuracy of the aggregated global model.
Although conventional FL uses various strategies to alleviate non-IID problems, e.g., SCAFFOLD~\cite{karimireddy2020scaffold} tries to estimate the update direction for the server model and the update direction for each client to correct for client drift problem, IFCA~\cite{ghosh2020efficient} identify each user's cluster membership and optimize each cluster model in a distributed setup by alternately estimating the user's clustering identity and optimizing the model parameters for the user's clustering, FedGen ~\cite{zhu2021data} is a data-free knowledge distillation approach to address heterogeneous FL, where the server learns a lightweight generator to ensemble user information in a data-free manner, which is then broadcasted to users. Since the main cloud server cannot obtain the full global model, the existing strategies cannot be directly integrated into SFL.
Moreover, due to the heterogeneity of hardware resources (e.g., CPUs and GPUs), the computing capabilities of different AIoT devices vary greatly, which results in large differences in the local training time of different devices.
Since model aggregation needs to collect the models of all activated devices, the efficiency of FL training is severely limited by low-performance devices (i.e., stragglers)~\cite{ma2021fedsa,xie2019asynchronous}. 
\textit{Therefore, how to improve the accuracy and efficiency of SFL in non-IID scenarios has become an important challenge.}

Since in SFL, the {\it Main Server} can obtain the labels of uploaded features, the {\it Main Server} can group the features according to their labels to make the label distribution of each group close to IID.
Intuitively, by using the grouped features with IID labels to train server-side model portion copies, SFL can alleviate model performance degradation caused by data heterogeneity.
Moreover, SFL can dispatch client-side model portions with different sizes to different devices, where the high-performance device is dispatched to a large-size client-side model portion and the low-performance device is dispatched to a small-sized client-side model portion.  
In this way, due to the dispatch of a small client-side model portion, low-performance devices can spend less time on model training, allowing them to keep pace with high-performance devices.

\begin{figure*}[h] 
	\begin{center} 
		\includegraphics[width=0.95\textwidth]{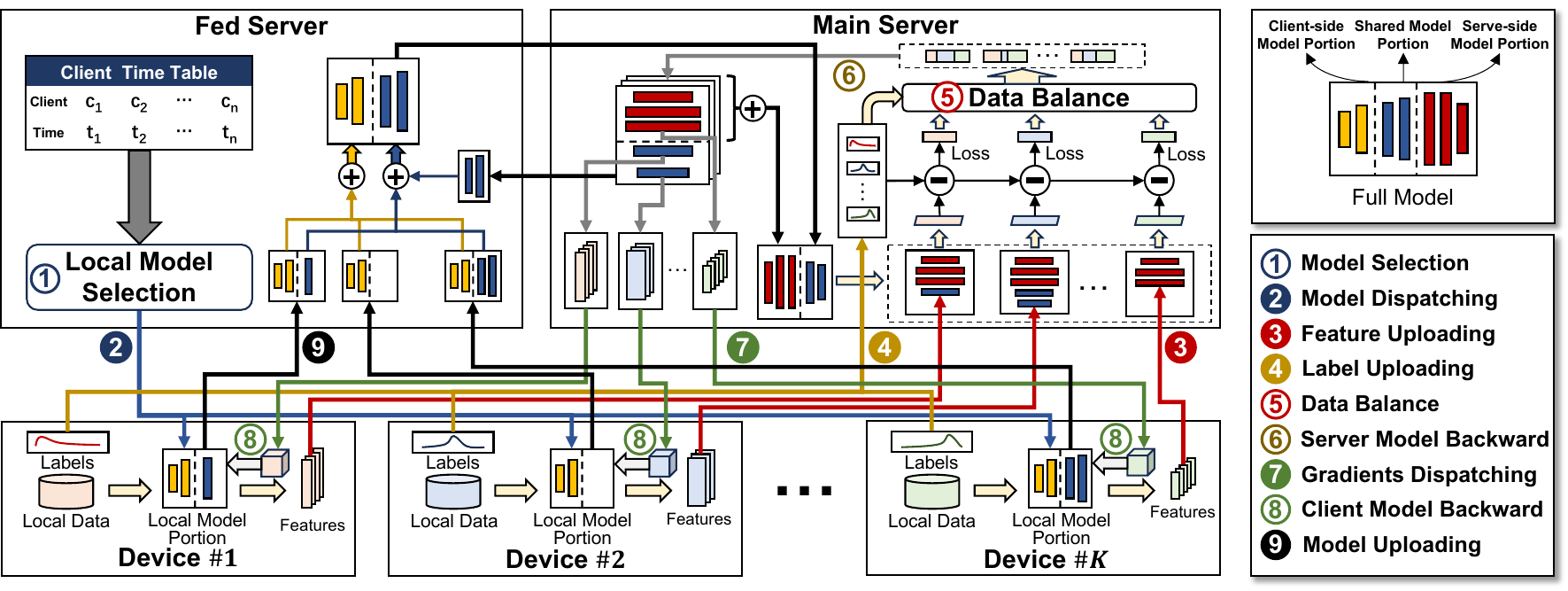}
		\caption{Framework and workflow of $S^2FL$.}
		\label{fig:framework} 
	\end{center}
\end{figure*}

Inspired by the above intuitions, this paper presents a novel SFL approach, named \textbf{S}liding \textbf{S}plit \textbf{F}ederated \textbf{L}earning (S$^2$FL), which adopts a data balance-based training mechanism and an adaptive sliding model split strategy.
In specific, in S$^2$FL, to alleviate data heterogeneity, the {\it Main Server} recombines intermediate features according to their labels to ensure that each server-side model portion is trained by more IID features.
To deal with stragglers, S$^2$FL split the global model into three portions, i.e., the server-side model portion, the shared model portion, and the client-side model portion. At each S$^2$FL training round, the {\it Fed Server} splits the shared model portion according to the hardware resource of each AIoT device and dispatches the split shared model portion together with the client-side model portion to the corresponding device. 
In this way, AIoT devices with varying hardware capabilities can be dispatched different sizes of model portions, thus alleviating the training time differences caused by stragglers.
The main contributions of this paper are as follows:
\begin{itemize}
\item[$\bullet$] We propose S$^2$FL, a novel SFL framework, which adaptively splits models according to the device hardware resource, making the training time similar between devices to alleviate low training efficiency caused by straggers.

\item[$\bullet$] We propose a data balancing-based training mechanism to address the problem of reduced inference accuracy due to data heterogeneity by grouping intermediate features according to their labels and making the training data distribution in the server-side model part more balanced.

\item[$\bullet$] We conduct experiments on various well-known datasets with various data heterogeneity scenarios to evaluate the effectiveness of our S$^2$FL approach.

\end{itemize}

The remaining parts of this paper are organized as follows. Section 2 introduces background and related works on Federated Learning.  Section 3 details our proposed $S^2FL$ approach. Section 4 analyzes the convergence of our approach. Section 5 presents the performance evaluation results of our approach. Finally, Section 6 concludes this paper.

\section{Background and Related Work}

\textbf{Federated Learning} (FL)~\cite{fedavg} is a distributed machine learning framework that addresses data privacy problems. It does not require each client to transmit local data to the cloud server. Still, each client directly trains the model locally and sends the insensitive model parameter information to the cloud server for aggregation in each round. The cloud server synchronizes the aggregated model parameters as the model parameters of the new round to each client. However, in the AIoT scenario, different edge devices possess varying computational capabilities and data distributions, which will lead to straggler problem and non-iid problem in FL. The straggler problem refers to fast devices need to wait for slow devices during training, thus wasting the computational resources of fast devices. This is because the time consumed to train the same model on different clients is not the same as the hardware resources of each client are different. However, the synchronized aggregation strategy used in FL needs to wait for the models of all the clients to be trained before starting the aggregation process. To solve the straggler problem, some works avoid or minimize inter-client waits by using asynchronous or semi-asynchronous aggregation\cite{xie2019asynchronous, wu2020safa, ma2021fedsa, che2022decentralized, chai2021fedat, cao2021hadfl}. For example, FedAT first groups all clients according to their training time and puts clients with similar training time into the same group. The models of clients within the same group are trained synchronously and aggregated into one model, and the models between groups are aggregated asynchronously. And there are other works balance the training time between clients by assigning each client a model that matches its hardware resources\cite{9586241, nishio2019client, jiang2022model}, such as Helios drops some nodes on the CNN model in proportion to the computational power of a single client, so that training on lagging devices can also be synchronized with fast devices trained on the full model. Non-iid problem means the inconsistent distribution of data between clients, which is usually reflected in the amount of data and the distribution of label. The non-IID problem causes the model to favor some clients with more data or to favor some labels with a greater number, which decreases the model's inference accuracy or makes it even unable to converge\cite{zhao2018federated}. Some current work addresses the non-IID problem through device selection\cite{zhang2023fedmds,long2023multi} and knowledge distillation\cite{itahara2021distillation, li2019fedmd, ma2022continual}. Although these works can effectively solve the straggler problem or the non-iid problem, they require the client to deploy a complete model. In the AIoT scenario with a large number of edge devices, devices with limited computational resources may not be able to train the complete model independently. At the same time, transferring the complete model between a large number of devices and servers imposes a significant communication burden, especially for asynchronous aggregation methods that may require frequent communication.

\textbf{Split Federated Learning (SFL)}~\cite{thapa2022splitfed} is a method that combines Federated Learning and Split Learning (SL)~\cite{gupta2018distributed,vepakomma2019reducing}. SL divides the complete model into several parts, with each model part deployed on a separate client for computation. The clients with adjacent model parts transmit the intermediate feature which is the output of their own model's last layer. When the model is large, SL effectively distributes the computing tasks to various clients, allowing clients that cannot independently load and train the complete model to participate in the model training. Furthermore, due to the smaller size of the feature compared to model parameters, SL has lower communication overhead compared to FL. Although SL has advantages when dealing with large models, the computation between clients is sequential, leading to a significant number of clients being idle during the training process. This results in a waste of computational resources and large training time overhead. In SFL, the complete model is divided into two parts: the client-side model portion and the server-side model portion. Each client communicates directly with the {\it Main Server} and {\it Fed Server}. In each training round, clients interact with the {\it Main Server} in parallel to execute the SL process. Subsequently, clients send their updated client model portion to the {\it Fed Server} for aggregation. The {\it Fed Server} aggregates all clients' models and synchronizes the aggregated model with all clients. However, a shortcoming of SFL is that it still experiences the straggler problem and non-iid problem in FL.                                                       

To the best of our knowledge, $S^2FL$ is the first attempt that address the straggler problem and non-iid problem on SFL with any types of DNN models. Compare with state-of-the-art FL methods, $S^2FL$ solves the problem that low-resource devices cannot train the complete model independently, enabling more low-resource devices to participate in the training. Compare with SFL method, $S^2FL$ can achieve higher inference accuracy and faster convergence rate.

\section{Our Approach}
Figure~\ref{fig:framework} presents the framework and workflow of S$^2$FL, which consists of two servers (i.e., the {\it Fed Server} and the {\it Main Server}) and multiple AIoT edge devices.
In S$^2$FL, the full global model is divided into three portions, i.e., the client-side model portion, the server-side model portion, and the shared model portion, which are maintained by the {\it Fed Server}, the {\it Main Server}, and both the {\it Fed Server}s and the {\it Main Server}s, respectively.
The {\it Fed Server} is used for the dispatching of client model portion and the aggregation of the client-side model and the shared model portions.
As shown in Figure~\ref{fig:framework}, the {\it Fed Server} includes a client time table, which maintains the average training time of each client. According to the client time table, the {\it Fed Server} splits a suitable size model portion from the shared model portion and dispatches it together with the client-side model portion as the client model portion to a specific AIoT device for model training.
The {\it Main Server} is responsible for training the latter part of the model. It receives the features and labels uploaded by the client devices, continuing the forward and backward propagation of the model, and finally returning the gradient to each device.


As shown in Figure \ref{fig:framework}, the workflow of our approach includes the following nine steps:
\begin{itemize}
\item[$\bullet$]\textbf{Step 1 (Model Selection):} The {\it {\it Fed Server}} chooses a client model portion $W_c$ for each selected device based on the {\it Client Time Table}. Each client model portion includes the complete client-side model portion and partial layers of the shared model portion.
\item[$\bullet$]\textbf{Step 2 (Model Dispatching): } The {\it {\it Fed Server}} dispatches the model portion $W_c$ to the selected devices.
\item[$\bullet$]\textbf{Step 3 (Feature Uploading): } Each device uses local data to perform forward propagation on the $W_c$ and upload the calculated features to the {\it {\it Main Server}}.
\item[$\bullet$]\textbf{Step 4 (Label Uploading): } Each device uploads the labels of local data to the {\it {\it Main Server}}.
\item[$\bullet$]\textbf{Step 5 (Data Balance): } When the {\it {\it Main Server}} has finished collecting features and labels, it uses the data balance-based training mechanism to group these features, and the {\it {\it Main Server}} will prepare the same number of copies of the server model portion $W_s$ as the number of groups, and the features in the same group will be input into the corresponding $W_s$ to calculate the loss. 
\item[$\bullet$]\textbf{Step 6 (Server Model Backward): } The {\it {\it Main Server}} aggregates all losses for each group and performs a backward propagation to update the corresponding server model portion $W_s$, and obtains the gradient of the features uploaded by each client device.
\item[$\bullet$]\textbf{Step 7 (Gradients Dispatching): } The {\it {\it Main Server}} sends the gradient to the client device.
\item[$\bullet$]\textbf{Step 8 (Client Model Backward): } After each device receives the gradient sent by the {\it {\it Main Server}}, it updates the client model portion $W_c$ with the gradient.
\item[$\bullet$]\textbf{Step 9 (Model Uploading):} Each client sends its model $W_c$ to the {\it {\it Fed Server}} for model aggregation.
\end{itemize}

\subsection{Adaptive Sliding Model Split Strategy}
In the AIoT scenario, edge devices possess varying computational capabilities and network bandwidth. We measure them respectively using the floating-point operations per second (FLOPS) and the transmission rates of each edge device. Assuming a complete model $W$ is divided into a client model portion $W_c$ and a server model portion $W_s$, where the size of $W_c$ is $|W_c|$, the output size of $W_c$'s final layer is $q$, and the floating-point computation counts for the forward and backward propagation of $W_c$ is $F_c$, while for $W_s$ it is denoted as $F_s$. The computation capability of the device is represented as $Comp_c$, with a transmission rate of $R$ and a local data quantity of $p$. The computation capability for the server is denoted as $Comp_s$. In our workflow, a complete training iteration starts with the {\it Fed Server} sending $W_c$ to the device and ends when the {\it Fed Server} receives the uploaded $W_c$ from the device. Under our assumptions, the time $T$ required for a single device to complete one round of training becomes: 
\begin{equation}
    T=\frac{2|W_c|+2pq}{R}+\frac{F_c}{Comp_c}+\frac{F_s}{Comp_s}.
\end{equation}

When each device trains the same $W_c$ and the same quantity $p$ of local data, high-performance devices with faster computational speed $Comp_c$ and higher transmission rate $R$ will be able to complete training more quickly, which causes high-performance devices to need to wait for other devices. To address this problem, the {\it Fed Server} will dynamically select the model portion that is appropriate for each device's computational resources so that the time $T$ spent on a training round is similar between devices. Specifically, in order to match devices with different computational resources, we set up $K$ possible split layers, in the first $K$ rounds of our approach, the {\it Fed Server} traverses all the split layers and uses one of them to divide $W$ into $W_c$ and $W_s$ within a same round, then it sends $W_c$ to all the devices and turns on a timer. When the {\it Fed Server} receives a $W_c$ uploaded by a device, it then records the time spent by that device this round in its own client time table. The client time table is a table for evaluating the computational resources of each device, which records for each device the time it takes to complete a training round when training different sizes of $W_c$. After the warm-up phase of round $K$, the {\it Fed Server} will select a subset of devices (assume $x$ devices) to participate in training in each round, and it will collect the time spent by these devices of training on each different $W_c$ from the client time table, and take the median of these times (total has $x*K$ times). Then, for each device, the {\it Fed Server} selects $W_c$ with the training time closest to the median as the client model portion used for this round of training. When the {\it Fed Server} receives the model uploaded by the device in this round, it dynamically updates the client time table based on the size of the client model portion and the training time.

\begin{figure}[h]
    \centering
    \includegraphics[width=0.5\textwidth]{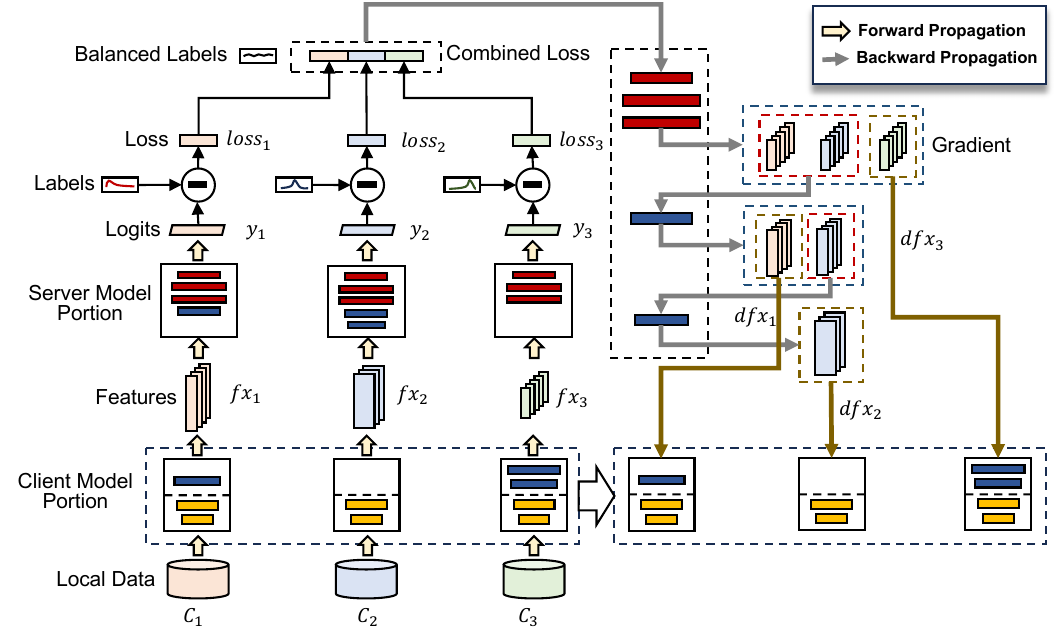}
    \caption{Data balance-based training mechanism}
    \label{union_train}
\end{figure}

\subsection{Data Balance-based Training Mechanism}
To address the problem of low inference accuracy caused by data heterogeneity, as shown in Figure~\ref{union_train}, we present a data balance-based training mechanism.
Assume that the dataset $D$ consists of data with $n$ categories, where $D$= $[D^{(1)},D^{(2)},D^{(3)},..., D^{(n)}]$ and $D^{(i)}$ represents the data set with the label $i$.

When the {\it Main Server} receives the intermediate features $fx$ and their labels $\mathcal L$ from the client device, it gets the data distribution of the client through $\mathcal L$. Next, the {\it Main Server} groups the $fx$ uploaded by devices.

Specifically, for $x$ clients participating in each round of training, the {\it Main Server} groups the $fx$ uploaded by clients whose combined data distribution is closest to the uniform distribution. In other words, we sum and normalize the data distribution of devices and then calculate the L2 distance from the uniform distribution as follows:
\begin{equation}
    \label{"balance loss"}
     Dist = \sqrt{\sum^n_{i=1} {(\frac{\sum_{c \in \mathcal{G}}|D^{(i)}_{c}|}{\sum_{p \in \mathcal{G}} |D_{p}|}-\frac{1}{n})^2}},
\end{equation}
where $\mathcal{G}$ denotes a group of devices, $|D^{(i)}_{c}|$ denotes the quantity of data with the label $i$ in client $c$, $|D_{p}|$ indicates total amount of data for the client $p$.
Note that the goal of our grouping is to minimize the distance.
After the grouping process, $fx$ in the same group inputs into the same server model portion $W_s$ for training. As shown in Figure \ref{union_train}, the intermediate features $fx_i$ within the same group are first input into $W_s$ for forward propagation to obtain logits $y_i$.
Then, the {\it Main Server} uses $y_i$ with $\mathcal{L}_i$ to calculate $loss_i$ using a specific loss function (e.g., cross entropy).
Next, the {\it Main Server} combines calculated losses in a group to generate the combined loss as follows:
\begin{equation}
    loss = \cup_{i \in group}loss_i.
\end{equation}
The {\it Main Server} performs backward propagation to update the server model portion $W_s$, calculates the gradient $dfx_i$ of each $fx_i$ based on the combined $loss$, and sends the $dfx_i$ to the corresponding device for local $W_c$ updating. The above process can be presented as follows:
\begin{equation}
\footnotesize
  W_s = W_s - \eta * \nabla(loss), dfx_i = \frac{\partial loss}{\partial fx_i}, Wc_i = Wc_i - \eta * dfx_i.
\end{equation}

\begin{algorithm}[h]
    \caption{Model aggregation}
    \label{alg:aggregation} 
    \KwIn{
    i) $C$, the client model set; 
    ii) $S$, the server model set;
    }
    \KwOut{
    $W$, the full model
    }
    \textbf{Aggregation} ($C$, $S$) \textbf{begin} \\
    $(|D_{1}|,|D_{2}|,..,|D_{x}|) \leftarrow$ Data size of each device \;
    \For{$layer$ in $W$} {
        \For{$Wc_i$ in $C$} {
            \If{$Wc_i$ has $layer$} {
                $W[layer] \leftarrow W[layer] + |D_{i}| \times Wc_i[layer]$\;
            }
            \Else{
                $W_{s}$ $\leftarrow$ the corresponding model to $Wc_i$ in $S$\;
                $W[layer] \leftarrow W[layer] + |D_{i}| \times W_{s}[layer]$\;
            }
            $weight[layer] \leftarrow weight[layer] + |D_{i}|$\;
        }
    }
    \For{$layer$ in $W$} {
        $W[layer] \leftarrow W[layer] / weight[layer]$
    }
    \textbf{return} $W$\;
    \textbf{end} 
\end{algorithm}

\subsection{Model Aggregation}
In S$^2$FL, the model is divided into three parts: client-side
model portion, shared model portion, and server-side model portion. The layer of the shared model portion can be split into the server model $W_s$ or client model $W_c$. As a result, S$^2$FL cannot use the simple weighted average to aggregate models like FedAvg. Algorithm \ref{alg:aggregation} presents our model aggregation method. Line 2 obtains the data size of $x$ devices participating in training. Lines 3-14 traverse each $layer$ of the full model $W$. For each client model portion $Wc_i$ in the client model set, if the $Wc_i$ has the $layer$, then the $Wc_i[layer]$ is used to participate in the model aggregation. If $W_c$ doesn't have the $layer$, we use the $layer$ in $W_s$ that corresponds to $Wc_i$ in the server model set to participate in model aggregation. In both cases, the weight is set to the data size of device $|D_{i}|$. Lines 15-17 average the aggregated model $W$.

\subsection{Implementation of Our Approach}
Algorithm~\ref{alg:impl} presents the implementation of our approach. Line 3 selects $x$ clients from all clients to participate in this round of training. 
Line 4 uses the adaptive sliding model split strategy to choose a suitable client model portion $Wc_i$ for each client $C_i$, making the training time of each client similar. 
Lines 5-8 represent each client performing forward propagation on their local data $D_i$ on $Wc_i$. The {\it Main Server} receives the features $fx_i$ and labels $\mathcal{L}_i$ sent by the clients. 
Line 9 initializes the client model set and server model set to store the client model portion and server model portion. It is worth noting that multiple models belonging to the same group in the client model set will correspond to a single model in the server model set.
Line 10 uses the data balance-based training mechanism to group $fx$ based on the data distribution of clients. Lines 11-23 represent the process of training for each group. 
For each $fx_i$ within the same group $\mathcal{G}$, line 14 inputs $fx_i$ into server model portion $W_s$ related to $\mathcal{G}$ for forward propagation, obtaining the logits $y_i$. 
Line 15 calculates the $loss_i$ for each client using $y_i$ and $\mathcal{L}_i$. 
Line 17 combines all $loss_i$ to generate $loss$. 
Line 18 uses the $loss$ for backward propagation to update $W_s$.Line 19 saves updated $W_s$ to the server model set.  Lines 20-22 calculate the gradient $dfx_i$ for each $fx_i$, and lines 24-28 indicate that the {\it Main Server} sends the gradient $dfx_i$ to the corresponding client $C_i$. 
The client uses $dfx_i$ to update the client model portion $Wc_i$ and then uploads $Wc_i$ to the {\it Fed Server}. {\it Fed Server} saves the $Wc_i$ into the client model set. 
Line 29 aggregates the models from both the client model set and the server model set to form the complete model $W$.

\begin{algorithm}[h]
    \caption{Our $S^2FL$ approach}
    \label{alg:impl} 
    \KwIn{
    i) $r$, maximum number of rounds; 
    ii) $C$, client set; 
    iii) $\eta$, learning rate;
    }
    \KwOut{
    $W$, the full model
    }
    $\textbf{S}^2\textbf{FL}$ ($r$, $C$, $\eta$) \textbf{begin} \\
    \For{$t$=1 \textbf{to} $r$} {
        $(C_1,..C_x) \leftarrow $ random choose $x$ clients from $C$\;
        $[Wc_{i} | i = 1..x] \leftarrow$ sliding\_model\_split()\;
        \For{$i$=1 \textbf{to} $x$} {
            $D_i$ $\leftarrow$ Local data in $C_i$\;
            $(fx_i,\mathcal{L}_i)$ $\leftarrow$ $Wc_i$($D_i$)\;
        }
        $cms,sms\leftarrow$ initialized client and server model set\;
        $\mathcal{G}\_set\leftarrow$ data\_balance\_mechanism($fx_i, \mathcal{L}_i$)\;
        /* parallel for */ \\
        \For{$\mathcal{G}$ \textbf{in} $\mathcal{G}\_set$} {
            \For{$i$ \textbf{in} $\mathcal{G}$} {
                $y_i$ = $W_s$($fx_i$)\;
                $loss_i = crossEntropy(y_i, \mathcal{L}_i)$\;
            }
            $loss = \cup_{i \in \mathcal{G}}loss_i$\;
            $W_s = W_s - \eta * \nabla(loss)$\;
            $sms$.append($W_s$)\;
            \For{$i$ \textbf{in} $\mathcal{G}$} {
                $dfx_i = \frac{\partial loss}{\partial fx_i}$\;
            }
        }
        \For{$i$=1 \textbf{to} $x$} {
            Send $dfx_i$ to $C_i$\;
            $Wc_i = Wc_i - \eta * dfx_i$\;
            $cms$.append($Wc_i$)\;
        }
        $W$ = Aggregation($cms$,$sms$)\;
    }
    \textbf{return}  $W$\;
    \textbf{end}\;
\end{algorithm}

\section{Convergence Analysis}
This section shows that $S^2FL$ converges to the global optimum at a rate of $O(1/t)$ for strongly convex and smooth functions and non-iid data.
Our convergence analysis is inspired by the convergence analysis in~\cite{li2019convergence,karimireddy2020scaffold}.

\subsection{Notations and Assumptions}
In our approach, we assume that both client model portion $W^c$ and server model portion $W^s$ are updated using Stochastic Gradient Descent (SGD). We use a single SGD for a timescale to divide the timeline into discrete slots $t$. After $t$ SGD updates, the client and server model portions are denoted as $W^c_t$ and $W^s_t$. We assume that the clients are categorized into $M$ groups through the \emph{data balance-based training mechanism}, and the clients in $m$-th group $\mathcal{G}_m$ are $\{m_1, m_2, ..., m_{n_m}\}$, then the process of the next SGD update in $m$-th group can be expressed as follows:
 \begin{equation}
 \footnotesize
 V^{cm_i}_{t+1} = W^{cm_i}_{t} - \eta_t * \nabla F_{m_{i}}(W^{sm}_{t} \oplus W^{cm_i}_{t}, \xi^{m_i}_{t})  \quad i=1..n_m,
\end{equation}
\begin{equation}
\footnotesize
V^{sm}_{t+1} = W^{sm}_{t} - \sum\limits_{i=1}^{n_m} \eta_t * \nabla F_{m_{i}}(W^{sm}_{t} \oplus W^{cm_i}_{t}, \xi^{m_i}_{t}).
\end{equation}

Here, we introduce temporary variables $V^{cm_i}_{t+1}$ and $V^{sm}_{t+1}$, which respectively store the intermediate results of each client model portion within the client group and the corresponding server model portion after one SGD update. $F_{m_{i}}$ represents the local objective of client $m_i$. $\eta_t$ is the learning rate and $\xi^{m_i}_{t}$ represents the local data used by client $m_i$ for SGD updates in the $t$-th round.

In $S^2FL$, model aggregation is performed once after every $E$ SGD updates,then we have:
\begin{equation}
\footnotesize
\begin{split}
w^{sm}_{t+1} \oplus w^{cm_i}_{t+1}=\left\{
\begin{array}{rl}
v^{sm}_{t+1} \oplus v^{cm_i}_{t+1}, &if (t + 1) \% E \neq 0 \\
\sum\limits_{m=1}^{M}\sum\limits_{i=1}^{n_m}p_{m_i}(v^{sm}_{t+1} \oplus v^{cm_i}_{t+1}), &if  (t + 1) \% E = 0\\
\end{array},
\right.  
\end{split}
\nonumber 
\end{equation}
where $p_{m_i} = \frac{|D_{m_i}|}{\sum\limits_{m=1}^{M}\sum\limits_{i=1}^{n_m}|D_{m_i}|}$ is the aggregation weight of client $m_i$.

To make our proof clear, the intermediate aggregation result of $w^{sm}_t \oplus w^{cm_i}_t$ and $v^{sm}_t \oplus v^{cm_i}_t$ are denoted as $\overline{w_t}$ and $\overline{v_t}$, and we define $\overline{g_t} = \sum\limits_{m=1}^{M}\sum\limits_{i=1}^{n_m}p_{m_i}\nabla F_{m_i}(w^{sm}_{t} \oplus w^{cm_i}_{t})$ and $g_t = \sum\limits_{m=1}^{M}\sum\limits_{i=1}^{n_m}p_{m_i}\nabla F_{m_i}(w^{sm}_{t} \oplus w^{cm_i}_{t}, \xi_t^k)$.

Inspired by~\cite{li2019convergence}, we make the following assumptions on the local objective functions $F_1$,$F_2$,$\cdots$ $F_{|c|}$ of each client:

\begin{assumption} \label{asm:smooth}
	$F_1, \cdots, F_{|c|}$ are all $L$-smooth:
	for all $v$ and $w$, $F_k(v)  \leq F_k(w) + (v - w)^T \nabla F_k(w) + \frac{L}{2}|v - w|_2^2$.
\end{assumption}

\begin{assumption} \label{asm:strong_cvx}
	$F_1, \cdots, F_N$ are all $\mu$-strongly convex:
	for all $v$ and $w$, $F_k(v)  \geq F_k(w) + (v - w)^T \nabla F_k(w) + \frac{\mu }{2} |v - w|_2^2$.
\end{assumption}

\begin{assumption} \label{asm:sgd_var}
	Let $\xi_t^k$ be sampled from the $k$-th device's local data uniformly at random.
	The variance of stochastic gradients in each device is bounded: 
 $$E \left\| \nabla F_k(w_t^s \oplus w_t^{ck},\xi_t^k) - \nabla F_k(w_t^s \oplus w_t^{ck}) \right\|^2 \le \sigma_k^2,$$ where $k=1,\cdots,|c|$.
\end{assumption}

\begin{assumption} \label{asm:sgd_norm}
	The expected squared norm of stochastic gradients is uniformly bounded, i.e., $E \left \| \nabla F_k(w_t^s \oplus w_t^{ck},\xi_t^k) \right \|^2  \le G^2$ for all $k=1,\cdots,|c|$ and $t=1,\cdots, T-1$.
\end{assumption}

 Let $F^*$ and $F_{m_i}^{\star}$ be the minimum value of global objective function $F$ and local objective function $F_{m_i}$. We can use $\Gamma = F^* -\sum\limits_{m=1}^{M}\sum\limits_{i=1}^{n_m}p_{m_i}F_{m_i}^{\star}$ for quantifying the degree of non-iid. Based on the above notations and assumptions, we can prove the Theorem 1.
\begin{theorem}
Assume that in $S^2FL$, the model aggregation is performed once after every $E$ SGD updates. Choose the $r = max\{8\frac{L}{\mu}, E\} - 1$ and the learning rate $\eta_t = \frac{2}{\mu(t + r)}$, We have
\begin{equation}
\begin{aligned}
E[F(\overline{w}_t)] - F^{\star} \leq \frac{L}{(r+t)}(\frac{2B}{\mu^2} + \frac{(r+1)}{2}E\Vert\overline{w}_1 - w^{\star}\Vert^2),
\end{aligned}
\end{equation}
where $\overline{w}_1$ is the initialized model,$w^{\star}$ is the optimal model, and $B = \sum\limits_{m=1}^{M}\sum\limits_{i=1}^{n_m}p_{m_i}^2\sigma_{m_i}^2 + 6L\Gamma + 8(E-1)^2G^2$.
\label{theo_1}
\end{theorem}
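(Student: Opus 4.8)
The plan is to follow the now-standard template of Li et al. (2019) for $\mathrm{FedAvg}$ convergence under strong convexity and smoothness, adapting it to the three-part split structure of $S^2FL$. The key observation is that once we treat the concatenation $w^{sm}_t \oplus w^{cm_i}_t$ as a single parameter vector on which client $m_i$ runs SGD on its local objective $F_{m_i}$, the $S^2FL$ update is algebraically an instance of local SGD with periodic averaging: between aggregation rounds each client evolves independently, and every $E$ steps the $\overline{v}_{t+1}$'s are replaced by their $p_{m_i}$-weighted average $\overline{w}_{t+1}$. The grouping only affects which server copy a client talks to during the $E$ inner steps, but since all server copies are re-averaged together at aggregation time, the recursion for $\overline{w}_t$ is unchanged; the group structure merely enters through the bookkeeping of which $w^{sm}$ appears in which gradient.

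First I would establish the one-step recursion for $\mathbb{E}\|\overline{v}_{t+1} - w^\star\|^2$. Write $\overline{v}_{t+1} = \overline{w}_t - \eta_t g_t$ where $g_t$ is the stochastic aggregated gradient and $\overline{g}_t$ its expectation, so that $\mathbb{E}\|\overline{v}_{t+1} - w^\star\|^2 = \|\overline{w}_t - w^\star\|^2 - 2\eta_t \langle \overline{w}_t - w^\star, \overline{g}_t\rangle + \eta_t^2 \mathbb{E}\|g_t\|^2$. The cross term is bounded using $L$-smoothness (Assumption~\ref{asm:smooth}) and $\mu$-strong convexity (Assumption~\ref{asm:strong_cvx}) together with the convexity inequality applied across the $p_{m_i}$-weighted sum; this is where $\Gamma$ enters, producing a term like $6L\Gamma$ after collecting the gap between $F^\star$ and the weighted sum of the $F_{m_i}^\star$. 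The second-order term $\mathbb{E}\|g_t\|^2$ splits into $\|\overline{g}_t\|^2$ plus the variance, which Assumption~\ref{asm:sgd_var} bounds by $\sum p_{m_i}^2\sigma_{m_i}^2$. The remaining ingredient is the \emph{divergence lemma}: bounding $\sum p_{m_i}\mathbb{E}\|\overline{w}_t - (w^{sm}_t\oplus w^{cm_i}_t)\|^2$ by $4\eta_t^2(E-1)^2 G^2$ using Assumption~\ref{asm:sgd_norm} and the fact that all local iterates agreed at the last aggregation point at most $E-1$ steps ago, with $\eta_t$ non-increasing. Combining these yields $\mathbb{E}\|\overline{v}_{t+1}-w^\star\|^2 \le (1-\mu\eta_t)\mathbb{E}\|\overline{w}_t - w^\star\|^2 + \eta_t^2 B$ with $B$ as defined in the statement.

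Next I would handle the aggregation step: because the weights $p_{m_i}$ sum to one, $\mathbb{E}\|\overline{w}_{t+1} - w^\star\|^2 \le \mathbb{E}\|\overline{v}_{t+1} - w^\star\|^2$ by convexity of $\|\cdot\|^2$ (Jensen), so the same recursion holds for $\overline{w}_{t+1}$ whether or not $t+1$ is an aggregation step. Then I would close the argument by induction on $t$: define $\Delta_t = \mathbb{E}\|\overline{w}_t - w^\star\|^2$ and show $\Delta_t \le \frac{v}{r+t}$ for $v = \max\{\frac{4B}{\mu^2}, (r+1)\Delta_1\}$, using the choice $\eta_t = \frac{2}{\mu(t+r)}$ and $r = \max\{8L/\mu, E\}-1$; the induction step is the routine calculation verifying $(1-\mu\eta_t)\frac{v}{t+r} + \eta_t^2 B \le \frac{v}{t+r+1}$, which needs $r\ge E-1$ (to make the divergence bound valid for all $t$) and $r \ge 8L/\mu - 1$ (to control the cross terms). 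Finally, $L$-smoothness gives $\mathbb{E}[F(\overline{w}_t)] - F^\star \le \frac{L}{2}\Delta_t \le \frac{L}{2(r+t)}\max\{\frac{4B}{\mu^2},(r+1)\Delta_1\} \le \frac{L}{r+t}\big(\frac{2B}{\mu^2} + \frac{r+1}{2}\Delta_1\big)$, which is the claimed bound.

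The main obstacle I anticipate is the divergence lemma in the split setting: unlike vanilla $\mathrm{FedAvg}$, the object $w^{sm}_t \oplus w^{cm_i}_t$ has its server half shared among all clients of group $m$ but its client half private, and the server half is updated by a \emph{sum} of $n_m$ gradients rather than a single one (see the $V^{sm}_{t+1}$ recursion), so the per-step drift of the server block is larger by a factor related to $n_m$. I would need to check carefully that Assumption~\ref{asm:sgd_norm}, which bounds the norm of a \emph{single} client's stochastic gradient, still suffices to bound the aggregated server update — either by absorbing the group size into constants, or by arguing that after the re-averaging across groups the effective drift is again controlled by $(E-1)^2 G^2$. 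Getting the constant to come out exactly as $8(E-1)^2 G^2$ in $B$ is the delicate part; everything else is bookkeeping on top of the Li et al. framework.
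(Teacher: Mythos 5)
Your proposal follows essentially the same route as the paper: the Li-et-al.-style one-step recursion treating $w^{sm}_t \oplus w^{cm_i}_t$ as a single iterate, the variance bound $\sum_{m}\sum_{i} p_{m_i}^2\sigma_{m_i}^2$, the divergence bound $4\eta_t^2(E-1)^2G^2$, the combined recursion $\Delta_{t+1}\le(1-\mu\eta_t)\Delta_t+\eta_t^2 B$, and the induction $\Delta_t\le v/(r+t)$ with $v=\max\{4B/\mu^2,(r+1)\Delta_1\}$ closed by $L$-smoothness — your Jensen argument for the aggregation round is only a cosmetic variant of the paper's remark that $\overline{w}_t=\overline{v}_t$ always holds. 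The obstacle you flag in the divergence step (the server block of each group is updated by a \emph{sum} of $n_m$ stochastic gradients, so the single-gradient bound $G^2$ does not immediately control its per-step drift) is a genuine subtlety, but note that the paper's own proof of its Lemma 3 does not resolve it either: it simply applies the vanilla single-gradient bound as in FedAvg.
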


\subsection{Proofs of Key Lemmas}
Before proving the main theorem we proposed, we propose and prove some useful lemmas first.

\begin{lemma}[Results of one step SGD]
\label{lemma:conv_main} 
Assume Assumption~\ref{asm:smooth} and~\ref{asm:strong_cvx}. If $\eta_t\leq \frac{1}{4L}$, we have
\begin{equation}
\begin{aligned}
E\Vert \overline{v}_{t+1} - w^{\star}\Vert^2 \leq (1-\mu\eta_t)E\Vert \overline{w}_t - w^{\star}\Vert^2 +6L\eta_t^2\Gamma + 2E\\\sum\limits_{m=1}^{M}\sum\limits_{i=1}^{n_m}p_{m_i} \Vert \overline{w}_t - w^{sm}_{t} \oplus w^{cm_i}_{t} \Vert^2 + \eta_t^2 E \Vert g_t - \overline{g}_t \Vert^2,
\end{aligned}
\end{equation}
where $\Gamma = F^* - \sum\limits_{m=1}^{M}\sum\limits_{i=1}^{n_m}|p_{m_i}|F_{m_i}^{\star} \ge 0$.
\end{lemma}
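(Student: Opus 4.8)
The plan is to adapt the classical one-step-SGD argument of~\cite{li2019convergence} to the group-concatenated setting, treating each pair $(m,i)$ (client $i$ of group $\mathcal{G}_m$) exactly as a single client is treated there: $w^{sm}_t\oplus w^{cm_i}_t$ plays the role of the local iterate and $\overline{w}_t=\sum_{m=1}^{M}\sum_{i=1}^{n_m}p_{m_i}(w^{sm}_t\oplus w^{cm_i}_t)$ the virtual average. First I would record the one-step identity $\overline{v}_{t+1}=\overline{w}_t-\eta_t g_t$ obtained by aggregating the SGD updates that define $V^{cm_i}_{t+1}$ and $V^{sm}_{t+1}$, and then decompose
\[\Vert\overline{v}_{t+1}-w^{\star}\Vert^2=\Vert\overline{w}_t-w^{\star}-\eta_t\overline{g}_t\Vert^2+2\eta_t\langle\overline{w}_t-w^{\star}-\eta_t\overline{g}_t,\,\overline{g}_t-g_t\rangle+\eta_t^2\Vert g_t-\overline{g}_t\Vert^2.\]
Taking expectations, the cross term vanishes because $\mathbb{E}[g_t]=\overline{g}_t$ (the minibatch gradients are unbiased, implicit in Assumption~\ref{asm:sgd_var}), and the last term becomes the $\eta_t^2E\Vert g_t-\overline{g}_t\Vert^2$ of the claim; everything therefore reduces to bounding $\mathbb{E}\Vert\overline{w}_t-w^{\star}-\eta_t\overline{g}_t\Vert^2$.

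Next I would expand $\Vert\overline{w}_t-w^{\star}-\eta_t\overline{g}_t\Vert^2=\Vert\overline{w}_t-w^{\star}\Vert^2-2\eta_t\langle\overline{w}_t-w^{\star},\,\overline{g}_t\rangle+\eta_t^2\Vert\overline{g}_t\Vert^2$. For $\eta_t^2\Vert\overline{g}_t\Vert^2$, apply Jensen over the weights $p_{m_i}$ and then Assumption~\ref{asm:smooth} in the form $\Vert\nabla F_{m_i}(v)\Vert^2\le 2L(F_{m_i}(v)-F_{m_i}^{\star})$. For the inner product I would insert $\pm(w^{sm}_t\oplus w^{cm_i}_t)$ inside the weighted sum, splitting it into a term $-2\eta_t\sum p_{m_i}\langle\overline{w}_t-(w^{sm}_t\oplus w^{cm_i}_t),\nabla F_{m_i}\rangle$, bounded by Cauchy--Schwarz and $2ab\le a^2+b^2$, and a term $-2\eta_t\sum p_{m_i}\langle(w^{sm}_t\oplus w^{cm_i}_t)-w^{\star},\nabla F_{m_i}\rangle$, bounded by $\mu$-strong convexity (Assumption~\ref{asm:strong_cvx}). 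Jensen once more, $\sum p_{m_i}\Vert(w^{sm}_t\oplus w^{cm_i}_t)-w^{\star}\Vert^2\ge\Vert\overline{w}_t-w^{\star}\Vert^2$, turns the strong-convexity quadratic into the contraction factor $(1-\mu\eta_t)$, and at this stage the perturbation quadratic $\sum p_{m_i}\Vert\overline{w}_t-(w^{sm}_t\oplus w^{cm_i}_t)\Vert^2$ appears with coefficient one.

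What remains is to absorb the leftover function-value terms $4L\eta_t^2\sum p_{m_i}(F_{m_i}-F_{m_i}^{\star})-2\eta_t\sum p_{m_i}(F_{m_i}-F_{m_i}(w^{\star}))$. Setting $\gamma_t=2\eta_t(1-2L\eta_t)$ and using $\sum p_{m_i}F_{m_i}(w^{\star})=F(w^{\star})=F^{\star}$, this equals $-\gamma_t\sum p_{m_i}\bigl(F_{m_i}(w^{sm}_t\oplus w^{cm_i}_t)-F^{\star}\bigr)+4L\eta_t^2\Gamma$. I would lower-bound $\sum p_{m_i}F_{m_i}(w^{sm}_t\oplus w^{cm_i}_t)$ by first-order convexity around $\overline{w}_t$, drop the resulting $F(\overline{w}_t)-F^{\star}$ contribution (which enters with a sign favorable to the upper bound, since $\eta_t\le\frac{1}{4L}$ keeps all the relevant coefficients nonnegative), and bound the residual inner products by AM--GM together with smoothness; this yields a second copy of $\sum p_{m_i}\Vert\overline{w}_t-(w^{sm}_t\oplus w^{cm_i}_t)\Vert^2$ and an additional $2L\eta_t^2\Gamma$. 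Summing up gives perturbation coefficient $2$ and $\Gamma$-coefficient $6L\eta_t^2$, which is exactly the asserted bound after recombining with $\eta_t^2\mathbb{E}\Vert g_t-\overline{g}_t\Vert^2$.

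I expect the only real difficulty to be bookkeeping around the $\oplus$-concatenation: checking that the identity $\overline{v}_{t+1}=\overline{w}_t-\eta_t g_t$ genuinely holds given that the group server portion $V^{sm}_{t+1}$ accumulates gradients over all of $\mathcal{G}_m$ while the aggregation re-weights coordinatewise by $p_{m_i}$, and then threading the condition $\eta_t\le\frac{1}{4L}$ through every AM--GM split so each discarded term really has the favorable sign. The individual inequalities are all standard; the care lies entirely in tracking constants and in keeping the server and client blocks aligned throughout.
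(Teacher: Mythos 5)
Your proposal follows essentially the same route as the paper's proof: the same decomposition $\Vert\overline{v}_{t+1}-w^{\star}\Vert^2$ into $\Vert\overline{w}_t-w^{\star}-\eta_t\overline{g}_t\Vert^2$ plus the variance term, the same insertion of $\pm(w^{sm}_t\oplus w^{cm_i}_t)$ with Cauchy--Schwarz/AM--GM and strong convexity, the same $\gamma_t=2\eta_t(1-2L\eta_t)$ bookkeeping, and the same final absorption yielding coefficients $2$ and $6L\eta_t^2$ (and you are, if anything, slightly more careful than the paper in noting that the cross term vanishes only in expectation and in flagging that the identity $\overline{v}_{t+1}=\overline{w}_t-\eta_t g_t$ needs checking against the $\oplus$-aggregation). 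The proposal is correct and matches the paper's argument.
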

\begin{proof}[proof of lemma \ref{lemma:conv_main}]
\label{proof:conv_main}

Notice that $\overline{v}_{t+1} = \overline{w}_t - \eta_tg_t$ and $E\Vert g_t\Vert=\overline{g}_t$,then
\begin{equation}
\label{equa:8}
\begin{aligned}
\Vert \overline{v}_{t+1} - w^{\star} \Vert^2 &= \Vert \overline{w}_{t} - \eta_tg_t  - w^{\star} - \eta_t\overline{g}_t + \eta_t\overline{g}_t \Vert^2\\&= \Vert\overline{w}_t - w^{\star} - \eta_t\overline{g}_t \Vert^2 + \eta_t^2\Vert g_t - \overline{g}_t \Vert^2 .
\end{aligned}
\end{equation}
We next focus on $\Vert\overline{w}_t - w^{\star} - \eta_t\overline{g}_t \Vert^2$ and split it into three parts
\begin{equation}
\label{equa:9}
\begin{aligned}
\Vert\overline{w}_t - w^{\star} - \eta_t\overline{g}_t \Vert^2 &= \Vert \overline{w}_t - w^{\star} \Vert^2 -2\eta_t\langle\overline{w}_t - w^{\star}, \overline{g}_t\rangle\\&+\eta_t^2\Vert\overline{g}_t^2\Vert^2.
\end{aligned}
\end{equation}
Base on the L-smoothness of $F_k(*)$ and the notation of $\overline{g}_t$,we bound $\eta_t^2\Vert\overline{g}_t^2\Vert^2$ as
\begin{equation}
\label{equa:10}
\begin{aligned}
\eta_t^2\Vert\overline{g}_t^2\Vert^2 &\leq \eta_t^2\sum\limits_{m=1}^{M}\sum\limits_{i=1}^{n_m}p_{m_i}\Vert\nabla F_{m_i} (w^{sm}_{t} \oplus w^{cm_i}_{t})\Vert^2 \\ &\leq 2L\eta_t^2\sum\limits_{m=1}^{M}\sum\limits_{i=1}^{n_m}p_{m_i}(F_{m_i}(w^{sm}_{t} \oplus w^{cm_i}_{t}) - F_{m_i}^{\star}).
\end{aligned}
\end{equation}
Next, we split $-2\eta_t\langle\overline{w}_t - w^{\star}, \overline{g}_t\rangle$ into two parts.
\begin{equation}
\label{equa:11}
\begin{aligned}
&-2\eta_t\langle\overline{w}_t - w^{\star}, \overline{g}_t\rangle \\ &= -2\eta_t\sum\limits_{m=1}^{M}\sum\limits_{i=1}^{n_m}\langle\overline{w}_t - w^{\star}, \nabla F_{m_i} (w^{sm}_{t} \oplus w^{cm_i}_{t})\rangle \\ &=-2\eta_t\sum\limits_{m=1}^{M}\sum\limits_{i=1}^{n_m}\langle\overline{w}_t -w^{sm}_{t} \oplus w^{cm_i}_{t} , \nabla F_{m_i} (w^{sm}_{t} \oplus w^{cm_i}_{t})\rangle \\ & -2\eta_t\sum\limits_{m=1}^{M}\sum\limits_{i=1}^{n_m}\langle w^{sm}_{t} \oplus w^{cm_i}_{t} - w^{\star} , \nabla F_{m_i} (w^{sm}_{t} \oplus w^{cm_i}_{t})\rangle.
\end{aligned}
\end{equation}
By Cauchy-Schwarz inequality and AM-GM inequality, we bound the first part.
\begin{equation}
\label{equa:12}
\begin{aligned}
&-2\langle\overline{w}_t -w^{sm}_{t} \oplus w^{cm_i}_{t} , \nabla F_{m_i} (w^{sm}_{t} \oplus w^{cm_i}_{t})\rangle \\ & \leq \frac{1}{\eta_t}\Vert \overline{w}_t - w^{sm}_{t} \oplus w^{cm_i}_{t} \Vert^2 + \eta_t \Vert \nabla F_{m_i} (w^{sm}_{t} \oplus w^{cm_i}_{t}) \Vert^2.
\end{aligned}
\end{equation}
By the $\mu$-strong convexity of $F_k(*)$, we bound the second part.
\begin{equation}
\label{equa:13}
\begin{aligned}
&-\langle w^{sm}_{t} \oplus w^{cm_i}_{t} - w^{\star} , \nabla F_{m_i} (w^{sm}_{t} \oplus w^{cm_i}_{t})\rangle  \leq \\ & -(F_{m_i} (w^{sm}_{t} \oplus w^{cm_i}_{t}) - F_{m_i}(w^{\star})) - \frac{\mu}{2} \Vert w^{sm}_{t} \oplus w^{cm_i}_{t} - w^{\star} \Vert^2.
\end{aligned}
\end{equation}
By combining equation \ref{equa:9} - \ref{equa:13}, the L-smoothness of $F_k(*)$ and the notation of $\overline{w}_t$.$\Vert\overline{w}_t - w^{\star} - \eta_t\overline{g}_t \Vert^2$ follows that
\begin{equation}
\label{equa:14}
\begin{aligned}
& \Vert\overline{w}_t - w^{\star} - \eta_t\overline{g}_t \Vert^2 \leq \Vert\overline{w}_t - w^{\star}\Vert^2 + \\ &2L\eta_t^2\sum\limits_{m=1}^{M}\sum\limits_{i=1}^{n_m}p_{m_i}(F_{m_i}(w^{sm}_{t} \oplus w^{cm_i}_{t}) - F_{m_i}^{\star})  + \eta_t\sum\limits_{m=1}^{M}\sum\limits_{i=1}^{n_m}\\ & (\frac{1}{\eta_t} \Vert \overline{w}_t - w^{sm}_{t} \oplus w^{cm_i}_{t} \Vert^2 + \eta_t \Vert \nabla F_{m_i} (w^{sm}_{t} \oplus w^{cm_i}_{t}) \Vert^2) \\ & - 2\eta_t\sum\limits_{m=1}^{M}\sum\limits_{i=1}^{n_m}p_{m_i}(F_{m_i} (w^{sm}_{t} \oplus w^{cm_i}_{t}) - F_{m_i}(w^{\star}) \\ & + \frac{\mu}{2} \Vert w^{sm}_{t} \oplus w^{cm_i}_{t} - w^{\star} \Vert^2) = (1 - \mu\eta_t)\Vert\overline{w}_t - w^{\star}\Vert^2 \\ & + \sum\limits_{m=1}^{M}\sum\limits_{i=1}^{n_m}p_{m_i}\Vert \overline{w}_t - w^{sm}_{t} \oplus w^{cm_i}_{t} \Vert^2 \\ &+4L\eta_t^2\sum\limits_{m=1}^{M}\sum\limits_{i=1}^{n_m}p_{m_i}(F_{m_i} (w^{sm}_{t} \oplus w^{cm_i}_{t}) - F_{m_i}^{\star})\\ & - 2\eta_t\sum\limits_{m=1}^{M}\sum\limits_{i=1}^{n_m}p_{m_i}((w^{sm}_{t} \oplus w^{cm_i}_{t}) - F_{m_i}(w^{\star})).
\end{aligned}
\end{equation}
Next, we define $\gamma_t = 2\eta_t(1-2L\eta_t)$ and use the notation $\Gamma = F^* - \sum\limits_{m=1}^{M}\sum\limits_{i=1}^{n_m}|p_{m_i}|F_{m_i}^{\star}$, we have
\begin{equation}
\label{equa:15}
\begin{aligned}
&4L\eta_t^2\sum\limits_{m=1}^{M}\sum\limits_{i=1}^{n_m} p_{m_i}(F_{m_i} (w^{sm}_{t} \oplus w^{cm_i}_{t}) - F_{m_i}^{\star}) \\ & - 2\eta_t\sum\limits_{m=1}^{M}\sum\limits_{i=1}^{n_m} p_{m_i}((w^{sm}_{t} \oplus w^{cm_i}_{t}) - F_{m_i}(w^{\star})) = \\ & -\gamma_t\sum\limits_{m=1}^{M}\sum\limits_{i=1}^{n_m}p_{m_i}(F_{m_i} (w^{sm}_{t} \oplus w^{cm_i}_{t}) - F^{\star}) + 4L\eta_t^2\Gamma.
\end{aligned}
\end{equation}
We use convexity and L-smoothness of $F_k(*)$ and the AM-GM inequality to bound $\sum\limits_{m=1}^{M}\sum\limits_{i=1}^{n_m}p_{m_i}(F_{m_i} (w^{sm}_{t} \oplus w^{cm_i}_{t}) - F^{\star})$.
\begin{equation}
\label{equa:16}
\begin{aligned}
&\sum\limits_{m=1}^{M}\sum\limits_{i=1}^{n_m}p_{m_i}(F_{m_i} (w^{sm}_{t} \oplus w^{cm_i}_{t}) - F^{\star})) = \sum\limits_{m=1}^{M}\sum\limits_{i=1}^{n_m}p_{m_i}\\ &(F_{m_i} (w^{sm}_{t} \oplus w^{cm_i}_{t}) - F_{m_i}({\overline{w}_t}) + \sum\limits_{m=1}^{M}\sum\limits_{i=1}^{n_m}p_{m_i}(F_{m_i}(\overline{w}_t) - F^{\star}) \\ & \geq \sum\limits_{m=1}^{M}\sum\limits_{i=1}^{n_m}p_{m_i}\langle \nabla F_{m_i}(\overline{w}_t), w^{sm}_{t} \oplus w^{cm_i}_{t} - \overline{w}_t) \rangle + (F(\overline{w}_t) - \\ & F^{\star})  \geq -\frac{1}{2}\sum\limits_{m=1}^{M}\sum\limits_{i=1}^{n_m}p_{m_i}[\eta_t\Vert\nabla F_{m_i}(\overline{w}_t)\Vert^2 + \frac{1}{\eta_t}\Vert w^{sm}_{t} \oplus  w^{cm_i}_{t} - \\ & \overline{w}_t\Vert^2]  + (F(\overline{w}_t) - F^{\star}) \geq -\sum\limits_{m=1}^{M}\sum\limits_{i=1}^{n_m}p_{m_i}[\eta_tL(F_{m_i}(\overline{w}_t) - \\ & F_k^{\star}) +  \frac{1}{2\eta_t}\Vert w^{sm}_{t} \oplus  w^{cm_i}_{t} - \overline{w}_t\Vert^2] + (F(\overline{w}_t) - F^{\star}).
\end{aligned}
\end{equation}
Last, by combining equation \ref{equa:8} and \ref{equa:14} - \ref{equa:16}, we can use the following facts to complete the proof: (1) $\eta_tL -1 \leq -\frac{3}{4}$ (2)$\sum\limits_{m=1}^{M}\sum\limits_{i=1}^{n_m}p_{m_i}(F(\overline{w}_t) - F^{\star}) = F(\overline{w}_t) - F^{\star} \geq 0$ (3)$\Gamma \geq 0$ (4)$\gamma_t\eta_t \leq 2 $ and $\frac{\gamma_t}{2\eta_t} \leq 1$.
\begin{equation}
\label{equa:17}
\begin{aligned}
&-\gamma_t\sum\limits_{m=1}^{M}\sum\limits_{i=1}^{n_m}p_{m_i}(F_{m_i} (w^{sm}_{t} \oplus w^{cm_i}_{t}) - F^{\star}) \leq \gamma_t\sum\limits_{m=1}^{M}\sum\limits_{i=1}^{n_m}\\&p_{m_i}[\eta_tL(F_{m_i}(\overline{w}_t) - F_k^{\star}) + \frac{1}{2\eta_t}\Vert w^{sm}_{t} \oplus  w^{cm_i}_{t} - \overline{w}_t\Vert^2] - \gamma_t\\ &(F(\overline{w}_t) - F^{\star}) \leq 2L\eta_t^2\Gamma + \sum\limits_{m=1}^{M}\sum\limits_{i=1}^{n_m}p_{m_i} \Vert \overline{w}_t - w^{sm}_{t} \oplus w^{cm_i}_{t} \Vert^2,
\end{aligned}
\end{equation}

\begin{equation}
\label{equa:18}
\begin{aligned}
&\Vert \overline{v}_{t+1} - w^{\star} \Vert^2 = \Vert\overline{w}_t - w^{\star} - \eta_t\overline{g}_t \Vert^2 + \eta_t^2\Vert g_t - \overline{g}_t \Vert^2 \\ & \leq (1 - \mu\eta_t)\Vert\overline{w}_t - w^{\star}\Vert^2 + \sum\limits_{m=1}^{M}\sum\limits_{i=1}^{n_m}p_{m_i}\Vert \overline{w}_t - w^{sm}_{t} \oplus w^{cm_i}_{t} \Vert^2 \\ & -\gamma_t\sum\limits_{m=1}^{M}\sum\limits_{i=1}^{n_m}p_{m_i}(F_{m_i} (w^{sm}_{t} \oplus w^{cm_i}_{t}) - F^{\star}) + 4L\eta_t^2\Gamma \\ & + \eta_t^2\Vert g_t - \overline{g}_t \Vert^2  \leq (1-\mu\eta_t)E\Vert \overline{w}_t - w^{\star}\Vert^2 + 6L\eta_t^2\Gamma + \\ & 2E\sum\limits_{m=1}^{M}\sum\limits_{i=1}^{n_m}p_{m_i}\Vert \overline{w}_t - w^{sm}_{t} \oplus w^{cm_i}_{t} \Vert^2 + \eta_t^2 E \Vert g_t - \overline{g}_t \Vert^2.
\end{aligned}
\end{equation}
\end{proof}

\begin{lemma}[Bounding the variance]
Assume Assumption~\ref{asm:sgd_var} holds. It follows that
	\label{lemma:conv_variance}
\begin{equation}
\begin{aligned}
E\Vert g_t- \overline{g}_t \Vert^2 \leq \sum\limits_{m=1}^{M}\sum\limits_{i=1}^{n_m}p_{m_i}^2\sigma_{m_i}^2.
\end{aligned}
\end{equation}
\end{lemma}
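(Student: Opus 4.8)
The plan is to expand $g_t - \overline{g}_t$ as a weighted sum of the per-client stochastic-gradient noises and exploit that these noises are zero-mean and mutually independent across clients. Writing $\xi_t^{m_i}$ for the minibatch drawn by client $m_i$ in round $t$, set
\[
X_{m_i} = p_{m_i}\bigl(\nabla F_{m_i}(w^{sm}_{t} \oplus w^{cm_i}_{t}, \xi_t^{m_i}) - \nabla F_{m_i}(w^{sm}_{t} \oplus w^{cm_i}_{t})\bigr),
\]
so that by the definitions of $g_t$ and $\overline{g}_t$ we have $g_t - \overline{g}_t = \sum_{m=1}^{M}\sum_{i=1}^{n_m} X_{m_i}$. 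Since $\xi_t^{m_i}$ is sampled uniformly at random from client $m_i$'s local data, the stochastic gradient $\nabla F_{m_i}(\cdot,\xi_t^{m_i})$ is an unbiased estimate of $\nabla F_{m_i}(\cdot)$ given the current iterate, hence $E[X_{m_i}] = 0$.

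Next I would invoke independence: within round $t$ the minibatches $\{\xi_t^{m_i}\}$ are drawn independently across clients, so conditioned on the history up to round $t$ (which determines $\overline{w}_t$ and the portions $w^{sm}_{t} \oplus w^{cm_i}_{t}$) the variables $X_{m_i}$ are mutually independent and each has conditional mean zero. Consequently every cross term vanishes in the expansion of the squared norm, leaving only the diagonal:
\[
E\Bigl\| \sum_{m=1}^{M}\sum_{i=1}^{n_m} X_{m_i} \Bigr\|^2 = \sum_{m=1}^{M}\sum_{i=1}^{n_m} E\|X_{m_i}\|^2 .
\]
Finally I would apply Assumption~\ref{asm:sgd_var} to each diagonal term, $E\|X_{m_i}\|^2 = p_{m_i}^2\, E\bigl\| \nabla F_{m_i}(w^{sm}_{t} \oplus w^{cm_i}_{t}, \xi_t^{m_i}) - \nabla F_{m_i}(w^{sm}_{t} \oplus w^{cm_i}_{t}) \bigr\|^2 \le p_{m_i}^2 \sigma_{m_i}^2$, and sum over all clients to obtain the stated bound.

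The argument is essentially routine substitution; the only point that genuinely needs care — and the step I would flag as the main obstacle — is rigorously justifying that the cross terms are zero. This rests on two facts: the unbiasedness of each local stochastic gradient given the current iterate, and the mutual independence of the sampling across clients within a round. Both follow from the sampling model underlying Assumption~\ref{asm:sgd_var} together with the observation that $\overline{w}_t$ is measurable with respect to the past, so the clean way to handle it is to condition on the $\sigma$-algebra generated by rounds $1,\dots,t-1$, apply the tower property, and then take the outer expectation; everything else is a direct plug-in of the variance bound.
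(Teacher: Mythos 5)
Your proposal is correct and follows essentially the same route as the paper: expand $g_t - \overline{g}_t$ as a weighted sum of per-client stochastic-gradient noise terms, drop the cross terms, and apply Assumption~\ref{asm:sgd_var} to each diagonal term. In fact you are more careful than the paper, which simply asserts the equality $E\Vert\sum X_{m_i}\Vert^2 = \sum E\Vert X_{m_i}\Vert^2$ without spelling out the unbiasedness, cross-client independence, and conditioning argument that you rightly identify as the step needing justification.
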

\begin{proof}[proof of lemma \ref{lemma:conv_variance}]
\label{proof:conv_variance}
From Assumption \ref{asm:sgd_var}, the variance of stochastic gradients in each device is bounded by $\sigma_k^2$, then
\begin{equation}
\label{equa:20}
\begin{aligned}
&E\Vert g_t - \overline{g}_t \Vert^2 =  E \Vert\sum\limits_{m=1}^{M}\sum\limits_{i=1}^{n_m}p_{m_i}(\nabla F_{m_i}(w^{sm}_{t} \oplus w^{cm_i}_{t}, \xi_t^k) \\&- \nabla F_{m_i}(w^{sm}_{t} \oplus w^{cm_i}_{t})) \Vert^2 = \sum\limits_{m=1}^{M}\sum\limits_{i=1}^{n_m}p_{m_i}^2E\Vert\nabla F_{m_i}(w^{sm}_{t} \\& \oplus w^{cm_i}_{t}, \xi_t^k) - \nabla F_{m_i}(w^{sm}_{t} \oplus w^{cm_i}_{t}) \Vert^2 \leq \sum\limits_{m=1}^{M}\sum\limits_{i=1}^{n_m}p_{m_i}^2\sigma_{m_i}^2.
\end{aligned}
\end{equation}
\end{proof}

\begin{lemma}[Bounding the divergence of ${w_t^k}$]
	\label{lemma:conv_diversity}
	Assume Assumption~\ref{asm:sgd_norm}, that $\eta_t$ is non-increasing and $\eta_{t} \le 2 \eta_{t+E}$ for all $t\geq 0$. It follows that
\begin{equation}
\begin{aligned}
 E[\sum\limits_{m=1}^{M}\sum\limits_{i=1}^{n_m}p_{m_i} \Vert \overline{w}_t - w^{sm}_{t} \oplus w^{cm_i}_{t}\Vert^2] \leq 4\eta^2_t(E-1)^2G^2.
\end{aligned}
\end{equation}
\begin{proof}[proof of lemma \ref{lemma:conv_diversity}]
\label{proof:conv_diversity}
Since $S^2FL$ performs model aggregation after every $E$ SGD update, for any round $t$, there must be a model aggregation process between round $t-E+1$ and round $t$.That is, there exists a $t_0$ satisfying $t - t_0 \leq E-1$ and $w^k_{t_0} = \overline{w}_{t_0}$ for $k$ from 1 to $|c|$. Then combining the assumption \ref{asm:sgd_norm} and $\eta_{t} \le 2 \eta_{t+E}$ for all $t\geq 0$, we have
\begin{equation}
\label{equa:22}
\begin{aligned}
&E[\sum\limits_{m=1}^{M}\sum\limits_{i=1}^{n_m}p_{m_i} \Vert \overline{w}_t - w^{sm}_{t} \oplus w^{cm_i}_{t}\Vert^2] \\& = E[\sum\limits_{m=1}^{M}\sum\limits_{i=1}^{n_m}p_{m_i} \Vert (w^{sm}_{t} \oplus w^{cm_i}_{t} - \overline{w}_{t_0}) - (\overline{w}_t -\overline{w}_{t_0})]\Vert^2 \\& \leq  E[\sum\limits_{m=1}^{M}\sum\limits_{i=1}^{n_m}p_{m_i} \Vert (w^{sm}_{t} \oplus w^{cm_i}_{t} - \overline{w}_{t_0})\Vert^2]
\\& \leq \sum\limits_{m=1}^{M}\sum\limits_{i=1}^{n_m}p_{m_i}E\sum\limits_{t=t_0}^{t-1}(E-1)\eta_t^2\Vert\nabla F_{m_i}(w^{sm}_{t} \oplus w^{cm_i}_{t}, \xi_t^{m_i}) \Vert^2
\\& \leq \sum\limits_{m=1}^{M}\sum\limits_{i=1}^{n_m}p_{m_i}\sum\limits_{t=t_0}^{t-1}(E-1)\eta_{t_0}^2G^2
\\& \leq \sum\limits_{m=1}^{M}\sum\limits_{i=1}^{n_m}p_{m_i}(E-1)^2\eta_{t_0}^2G^2
\\& \leq 4\eta^2_t(E-1)^2G^2.
\end{aligned}
\end{equation}
\end{proof}
\end{lemma}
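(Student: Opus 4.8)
The plan is to exploit the periodic aggregation structure of $S^2FL$: since the two servers aggregate only once every $E$ SGD updates, the local models $w^{sm}_t \oplus w^{cm_i}_t$ can have drifted from their common synchronization point for at most $E-1$ steps. First I would fix an arbitrary round $t$ and locate the most recent aggregation time $t_0 \le t$, so that $t - t_0 \le E-1$ and, crucially, all local models coincide there, $w^{sm}_{t_0} \oplus w^{cm_i}_{t_0} = \overline{w}_{t_0}$ for every group $m$ and index $i$. This is the same $t_0$ used in equation~\ref{equa:22}.

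The next step is a bias--variance style decomposition. Because $\overline{w}_t = \sum_{m}\sum_{i} p_{m_i}(w^{sm}_t \oplus w^{cm_i}_t)$ is exactly the $p_{m_i}$-weighted mean of the local models, the difference $\overline{w}_t - \overline{w}_{t_0}$ is the weighted mean of the centered increments $(w^{sm}_t \oplus w^{cm_i}_t) - \overline{w}_{t_0}$. Writing $\overline{w}_t - (w^{sm}_t \oplus w^{cm_i}_t) = \bigl[(w^{sm}_t \oplus w^{cm_i}_t) - \overline{w}_{t_0}\bigr] - \bigl[\overline{w}_t - \overline{w}_{t_0}\bigr]$ and invoking the elementary inequality $\sum_i p_i \Vert X_i - \bar X\Vert^2 \le \sum_i p_i \Vert X_i\Vert^2$ for the $p_i$-weighted mean $\bar X$, the subtracted mean term only helps, so I can drop it and bound the weighted divergence by $\sum_m\sum_i p_{m_i}\, E\Vert (w^{sm}_t \oplus w^{cm_i}_t) - \overline{w}_{t_0}\Vert^2$.

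Then I would unroll the SGD recursion from $t_0$ to $t$: since no aggregation intervenes, $(w^{sm}_t \oplus w^{cm_i}_t) - \overline{w}_{t_0} = -\sum_{\tau=t_0}^{t-1}\eta_\tau \nabla F_{m_i}(w^{sm}_\tau \oplus w^{cm_i}_\tau, \xi^{m_i}_\tau)$. Applying Cauchy--Schwarz (equivalently Jensen) over the at most $E-1$ summands gives $\Vert\cdot\Vert^2 \le (t-t_0)\sum_{\tau=t_0}^{t-1}\eta_\tau^2\Vert \nabla F_{m_i}(\cdots,\xi^{m_i}_\tau)\Vert^2 \le (E-1)\sum_{\tau=t_0}^{t-1}\eta_\tau^2\Vert \nabla F_{m_i}\Vert^2$, and Assumption~\ref{asm:sgd_norm} replaces each expected squared gradient norm by $G^2$. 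Summing the $E-1$ terms and using $\sum_m\sum_i p_{m_i}=1$ yields a bound of the form $(E-1)^2\eta_{t_0}^2 G^2$.

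The last step---and the only place I expect real friction---is converting the learning rate at the stale index $t_0$ into the learning rate at $t$. Here the regularity hypotheses do the work: monotonicity ($\eta$ non-increasing) with $\eta_\tau \le \eta_{t_0}$ for $\tau \ge t_0$ lets me pull $\eta_{t_0}^2$ out of the sum, and the condition $\eta_{t} \le 2\eta_{t+E}$ gives $\eta_{t_0} \le 2\eta_{t_0+E} \le 2\eta_t$, using $t_0 + E > t$ together with monotonicity, hence $\eta_{t_0}^2 \le 4\eta_t^2$. Substituting produces the claimed $4\eta_t^2(E-1)^2 G^2$. The subtle bookkeeping is verifying that the chosen $t_0$ simultaneously satisfies $t - t_0 \le E-1$ and $t_0 + E > t$, so that both regularity conditions can be chained; beyond that, the argument is a routine telescope-and-bound that carries over unchanged to the concatenated model $w^{sm}_t \oplus w^{cm_i}_t$.
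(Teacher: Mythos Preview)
Your proposal is correct and follows essentially the same route as the paper's proof: locate the last synchronization time $t_0$, drop the weighted-mean term via the variance inequality, unroll the SGD recursion, apply Cauchy--Schwarz and Assumption~\ref{asm:sgd_norm}, and finally convert $\eta_{t_0}^2$ to $4\eta_t^2$ using monotonicity together with $\eta_t \le 2\eta_{t+E}$. If anything, your bookkeeping around $t_0 + E > t$ is more explicit than the paper's.
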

\subsection{Proof of Theorem \ref{theo_1}}
Using the previous three lemmas, we can prove the theorem \ref{theo_1} showing that $S^2FL$ converges to the global optimum at a rate of $O(1/t)$.
\begin{proof}
\label{proof:main_theorem}
Note that no matter whether the current round $t$ is an aggregation round or not, $\overline{w}_t = \overline{v}_t$ always holds. We let $\triangle_t = E\Vert\overline{w}_t - w^{\star}\Vert^2$. From Lemma \ref{lemma:conv_main}, we have
\begin{equation}
\begin{aligned}
&\triangle_{t+1} \leq (1-\mu\eta_t)\triangle_{t} +6L\eta_t^2\Gamma + 2E\sum\limits_{m=1}^{M}\sum\limits_{i=1}^{n_m}p_{m_i}\\& \Vert \overline{w}_t - w^{sm}_{t} \oplus w^{cm_i}_{t} \Vert^2 + \eta_t^2 E \Vert g_t - \overline{g}_t \Vert^2.
\end{aligned}
\end{equation}
Next, we use Lemma \ref{lemma:conv_variance} to bound $E \Vert g_t - \overline{g}_t \Vert^2$ and use Lemma \ref{lemma:conv_diversity} to bound $E\sum\limits_{m=1}^{M}\sum\limits_{i=1}^{n_m}p_{m_i} \Vert \overline{w}_t - w^{sm}_{t} \oplus w^{cm_i}_{t} \Vert^2$,
\begin{equation}
\begin{aligned}
\triangle_{t+1} &\leq (1-\mu\eta_t)\triangle_{t} + 6L\eta_t^2\Gamma + 8\eta_t^2(E-1)^2G^2 \\ & + \eta_t^2\sum\limits_{m=1}^{M}\sum\limits_{i=1}^{n_m}p_{m_i}^2\sigma_{m_i}^2 = (1-\mu\eta_t)\triangle_{t} + \eta_t^2B,
\end{aligned}
\end{equation}
where B = $\sum\limits_{m=1}^{M}\sum\limits_{i=1}^{n_m}p_{m_i}^2\sigma_{m_i}^2 + 6L\Gamma + 8(E-1)^2G^2$.

We set $\eta_t = \frac{\beta}{t + r}$ for some $\beta > \frac{1}{\mu}$ and $r > 0$ such that $\eta_1 \leq min\{\frac{1}{\mu}, \frac{1}{4L}\}$ and $\eta_t \leq 2\eta_{t+E}$.Then, we will prove $\triangle_t \leq \frac{v}{r+t}$ by induction, where $v = max\{\frac{\beta^2B}{\beta\mu-1}, (r+1)\triangle_1\}$.

Firstly, the definition of $v$ ensures that $\triangle_t \leq \frac{v}{r+t}$ holds for $t = 1$. Assume it holds for some round $t$, the $\triangle_{t+1}$ follows that
\begin{equation}
\begin{aligned}
\triangle_{t+1} & \leq (1-\mu\eta_t)\triangle_t + \eta_t^2B \\
& \leq (1-\frac{\beta\mu}{t+r})\frac{v}{t+r}+\frac{\beta^2B}{(t+r)^2} \\
& = \frac{t+r-1}{(t+r)^2}v + [\frac{\beta^2B}{(t+r)^2} - \frac{\beta\mu - 1}{(t+r)^2}v] \\
& \leq \frac{v}{t+r+1}.
\end{aligned}
\end{equation}
By the L-smoothness of $F(*)$, we have
\begin{equation}
E[F(\overline{w}_t)] - F^{\star} \leq \frac{L}{2}\triangle_t \leq \frac{L}{2}\frac{v}{r+t}.
\end{equation}
Specifically, we choose $\beta = \frac{2}{\mu}, r = max\{8\frac{L}{\mu}, E\} - 1$. We can obtain the following two equations
\begin{equation}
\begin{aligned}
&v = max\{\frac{\beta^2B}{\beta\mu-1}, (r+1)\triangle_1\} \leq \frac{\beta^2B}{\beta\mu-1} + (r+1)\triangle_1\\ & \leq \frac{4B}{\mu^2} + (r+1)\triangle_1,
\end{aligned}
\end{equation}
\begin{equation}
\begin{aligned}
E[F(\overline{w}_t)] - F^{\star} \leq \frac{L}{2}\frac{v}{r+t} \leq \frac{L}{(r+t)}(\frac{2B}{\mu^2} + \frac{(r+1)}{2}\triangle_1).
\end{aligned}
\end{equation}
\end{proof}

\section{Experimental Results}
To evaluate the effectiveness of $S^2FL$, we implemented our approach using the PyTorch framework. We compared $S^2FL$ with the classical FL method FedAvg and the vanilla SFL. We select three different split layers on each model $W$, dividing $W$ into three different model portion tuples:[$(Wc_1,Ws_1)$, $(Wc_2,Ws_2)$, $(Wc_3,Ws_3)$] where $size(Wc_1)<size(Wc_2)<size(Wc_3)<size(W)$. We used the Python tool libraries \emph{thop} to count the size of these model portions and the FLOPs needed to perform one forward and backward propagation on a single input, as shown in Figure \ref{modelsetting}. This Figure shows that the smaller model portion $W_c$ has a smaller size and fewer FLOPs compared to the full model $W$, and this disparity appears to be more pronounced for complex models. Devices train the complete model $W$ in FedAvg and train the largest client model portion $Wc_3$ in SFL. Our approach uses the adaptive sliding model split strategy to choose a suitable client model portion for each device. To enable fair comparison, we set the batch size to 128 and used an SGD optimizer with a learning rate of 0.01 for both all the baselines and $S^2FL$. All the experimental results were obtained from an Ubuntu workstation with an Intel i9 CPU, 64GB memory, and an NVIDIA RTX 3090 GPU.

\subsection{Experimental Settings}
1) \emph{Settings of Devices}: We set the FLOPS and transfer rate of each device to measure the computing power and network bandwidth of these devices. In the experiment, we assume that there are three different FLOPS and three different transfer rates. Table \ref{device setting} shows the settings of FLOPS and transfer rates. Each device has a specific quality of FLOPS and transfer rate in this table. Because FLOPS and transfer rate are not correlated,  there are a total of 9 different kinds of devices in our experiment. We assume that the server has ultra-high computing resources, its FLOPS is set to $5\times10^{10}$, and the transfer rate is set to $1\times10^7$.

\begin{figure}[h]
    \centering
    \subfloat[Sizes of different model]{
    \includegraphics[width=0.23\textwidth]{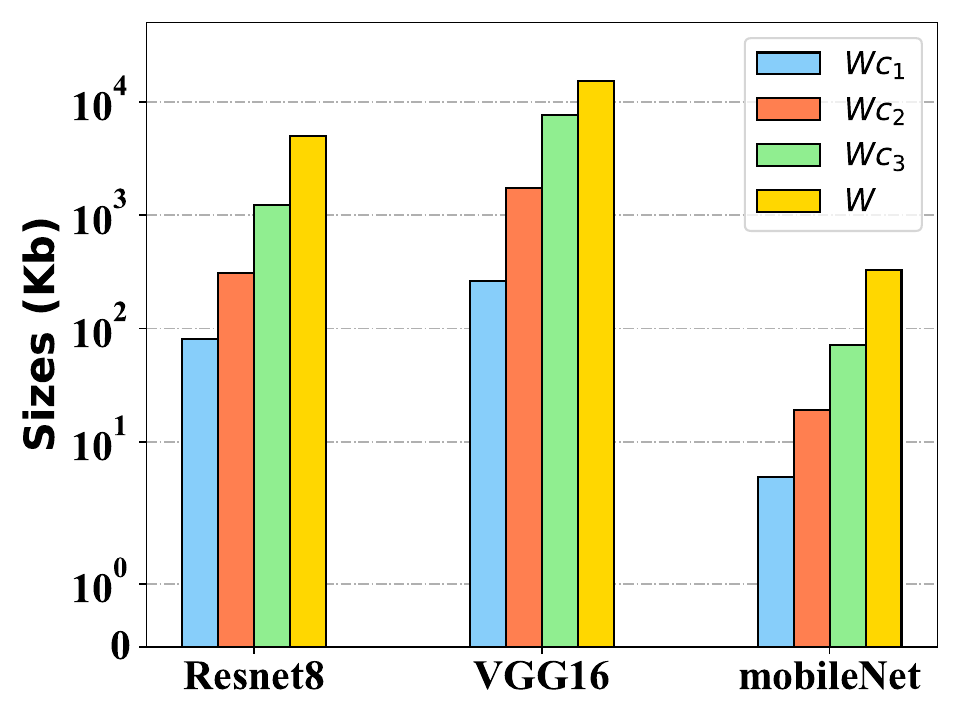}
        \label{fig:modelsetting_size}
    }
    \subfloat[FLOPs of different model]{
    \includegraphics[width=0.23\textwidth]{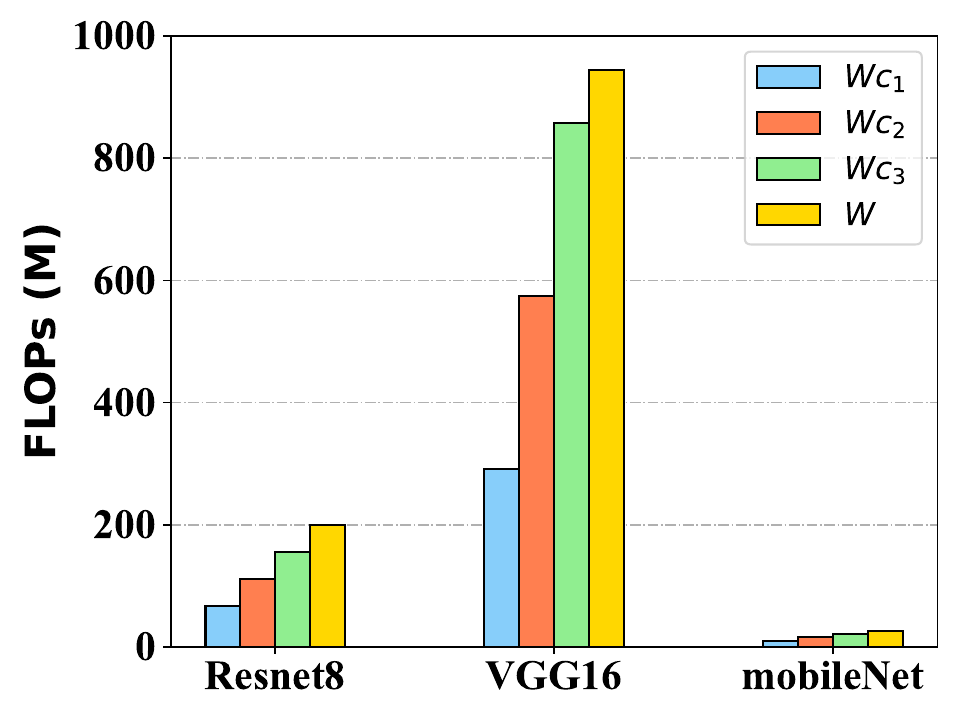}
        \label{fig:modelsetting_flops}
    }
    \caption{Comparison of size and FLOPs of different model portions}
    \label{modelsetting}
\end{figure}

\begin{table}[h]
\centering
\caption{Devices settings}
\label{device setting}
\begin{tabular}{|c|c|c|}
\hline
Quality & FLOPS Settings & Transfer Rate Settings \\ \hline\hline
Low    & $5\times10^9$  & $1\times10^6$          \\ \hline
Mid     & $1\times10^{10}$  & $2\times10^6$          \\ \hline
High     & $2\times10^{10}$  & $5\times10^6$          \\ \hline
\end{tabular}
\end{table}

2) \emph{Settings of Datasets and Models:} We compared the
performance of our approach and two baselines on four well-known datasets, i.e., CIFAR-10, CIFAR-100\cite{krizhevsky2009learning}, ImageNet\cite{deng2009imagenet} and FEMNIST \cite{LEAF}. To investigate the performance of our approach on the non-IID scenarios, we use the Dirichlet distribution to control the data heterogeneity between devices on datasets CIFAR-10, CIFAR-100, and ImageNet and set a parameter $a$ to indicate the degree of heterogeneity, and for datasets FEMNIST, we use its natural no-IID distribution. For experiments on datasets CIFAR-10, CIFAR-100, and ImageNet, we assume that there are 100 AIoT devices, and 10 devices are randomly selected for local training in each training round. However, for dataset FEMNIST, there are a total of 180 devices, and each training round involves 18 devices. In addition, to show the pervasiveness of our approach, we experimented on three different DNN models(i.e., ResNet-8\cite{he2016deep}, VGG-16\cite{simonyan2014very}, and mobileNet\cite{howard2017mobilenets}).
\begin{figure}[h]
    \centering
    \subfloat[$a = 0.1$]{
    \includegraphics[width=0.23\textwidth]{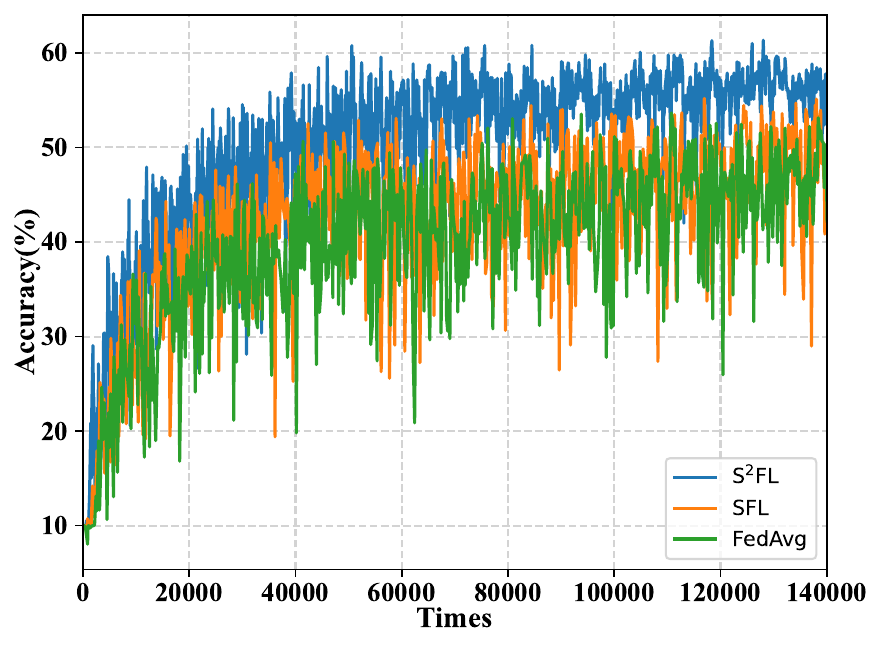}
        \label{fig:subfig1}
    }
    \hspace{-0.2in}
    \subfloat[$a = 0.5$]{
    \includegraphics[width=0.23\textwidth]{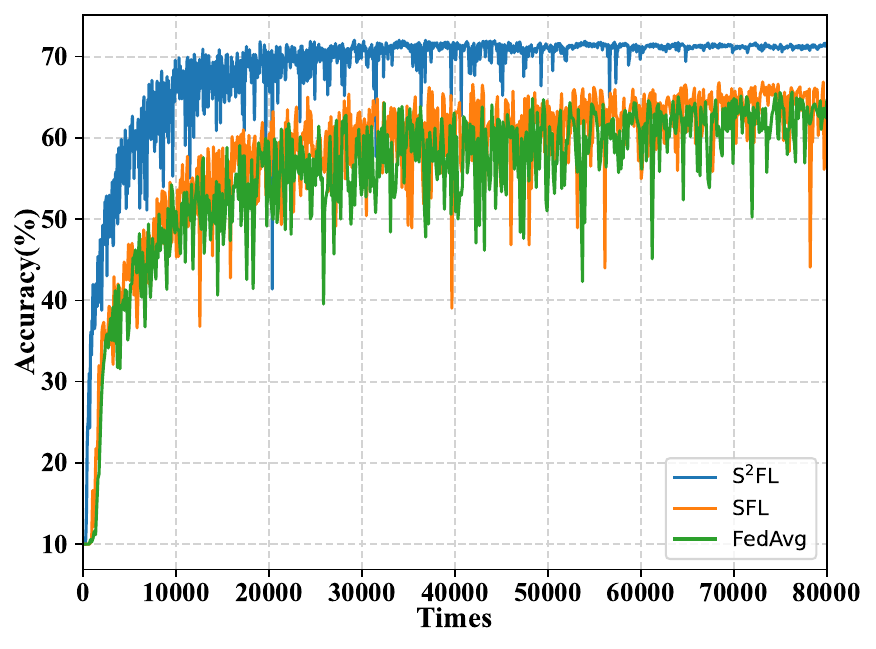}
        \label{fig:subfig2}
    }
    \hfill
    \subfloat[$a = 1.0$]{
    \includegraphics[width=0.23\textwidth]{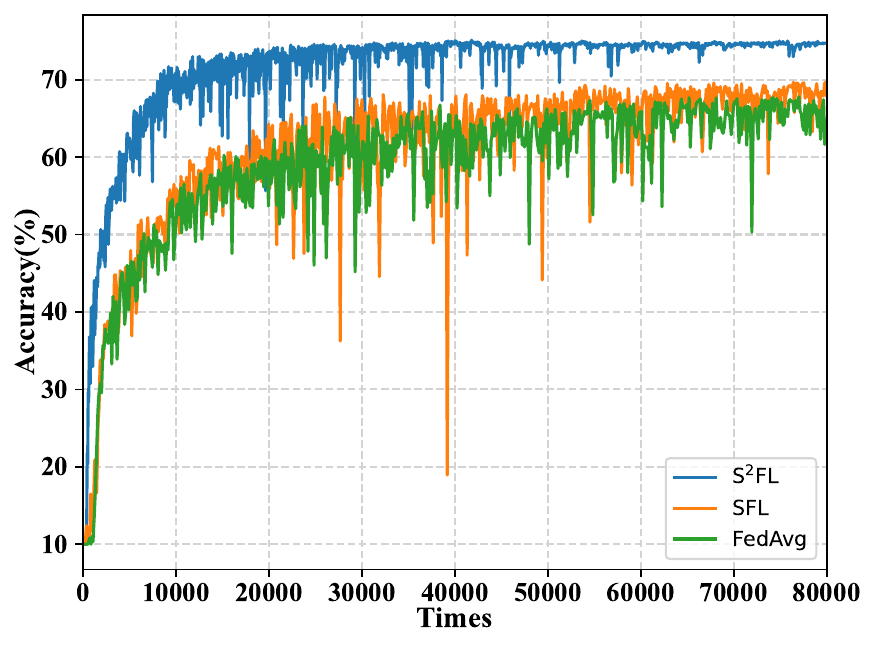}
        \label{fig:subfig3}
    }
    \hspace{-0.2in}
    \subfloat[$IID$]{
    \includegraphics[width=0.23\textwidth]{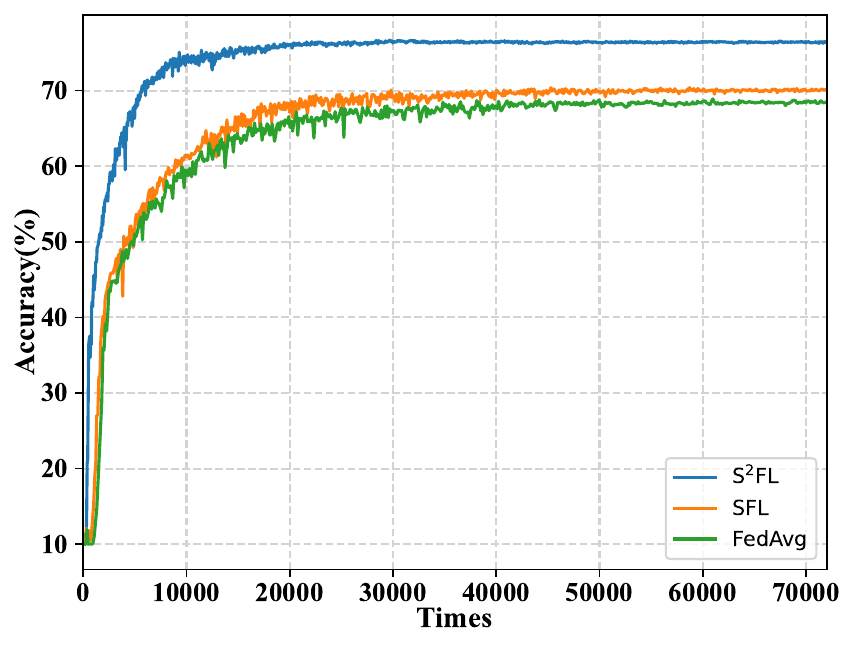}
        \label{fig:subfig4}
    }
    \caption{Training process of $S^2FL$ and two baselines on CIFAR-10}
    \label{test_acc}
\end{figure}

\begin{table*}[h]
\centering
\caption{Comparison of test accuracy}
\label{test acc}
\scalebox{1.0}{
\begin{tabular}{|c|c|c|ccc|}
\hline
\multirow{2}{*}{Dataset}    & \multirow{2}{*}{Model}     & \multirow{2}{*}{\begin{tabular}[c]{@{}c@{}}Hetero. \\      Settings\end{tabular}} & \multicolumn{3}{c|}{Test Accuracy(\%)}                                                      \\ \cline{4-6} 
                            &                            &                                                                                         & \multicolumn{1}{c|}{FedAvg}         & \multicolumn{1}{c|}{SFL}             & $S^2FL$(ours)     \\ \hline\hline
\multirow{12}{*}{CIFAR-10}  & \multirow{4}{*}{ResNet8}   & $a = 0.1$                                                                               & \multicolumn{1}{c|}{$48.65\pm1.16$} & \multicolumn{1}{c|}{$49.13\pm1.70$}  & $\textbf{51.87}\pm\textbf{1.70}$ \\
                            &                            & $a = 0.5$                                                                               & \multicolumn{1}{c|}{$56.96\pm0.31$} & \multicolumn{1}{c|}{$58.15\pm0.32$}  & $\textbf{60.43}\pm\textbf{0.38}$ \\
                            &                            & $a = 1.0$                                                                               & \multicolumn{1}{c|}{$58.72\pm0.27$} & \multicolumn{1}{c|}{$60.48\pm0.20$}  & $\textbf{62.78}\pm\textbf{0.19}$ \\
                            &                            & $IID$                                                                                   & \multicolumn{1}{c|}{$59.52\pm0.15$} & \multicolumn{1}{c|}{$62.02\pm0.07$}  & $\textbf{64.08}\pm\textbf{0.09}$ \\ \cline{2-6} 
                            & \multirow{4}{*}{VGG16}     & $a = 0.1$                                                                               & \multicolumn{1}{c|}{$51.61\pm1.73$} & \multicolumn{1}{c|}{$52.50\pm3.21$}   & $\textbf{58.94}\pm\textbf{1.58}$ \\
                            &                            & $a = 0.5$                                                                               & \multicolumn{1}{c|}{$65.53\pm0.27$} & \multicolumn{1}{c|}{$66.04\pm0.35$}  & $\textbf{72.45}\pm\textbf{0.21}$ \\
                            &                            & $a = 1.0$                                                                               & \multicolumn{1}{c|}{$68.67\pm0.29$} & \multicolumn{1}{c|}{$68.96\pm0.36$}  & $\textbf{74.65}\pm\textbf{0.20}$ \\
                            &                            & $IID$                                                                                   & \multicolumn{1}{c|}{$68.57\pm0.11$} & \multicolumn{1}{c|}{$70.15\pm0.06$}  & $\textbf{75.12}\pm\textbf{0.09}$ \\ \cline{2-6} 
                            & \multirow{4}{*}{mobileNet} & $a = 0.1$                                                                               & \multicolumn{1}{c|}{$35.68\pm0.62$} & \multicolumn{1}{c|}{$36.68\pm0.42$}  & $\textbf{38.19}\pm\textbf{0.83}$ \\
                            &                            & $a = 0.5$                                                                               & \multicolumn{1}{c|}{$47.85\pm0.55$} & \multicolumn{1}{c|}{$49.01\pm0.41$}  & $\textbf{52.80}\pm\textbf{0.26}$  \\
                            &                            & $a = 1.0$                                                                               & \multicolumn{1}{c|}{$52.29\pm0.57$} & \multicolumn{1}{c|}{$52.08\pm0.35$}  & $\textbf{58.33}\pm\textbf{0.34}$ \\
                            &                            & $IID$                                                                                   & \multicolumn{1}{c|}{$55.56\pm0.15$} & \multicolumn{1}{c|}{$55.21\pm0.08$}  & $\textbf{61.61}\pm\textbf{0.13}$ \\ \hline
\multirow{12}{*}{CIFAR-100} & \multirow{4}{*}{ResNet8}   & $a = 0.1$                                                                               & \multicolumn{1}{c|}{$29.72\pm0.30$}  & \multicolumn{1}{c|}{$27.15\pm0.15$}  & $\textbf{34.90}\pm\textbf{0.23}$ \\
                            &                            & $a = 0.5$                                                                               & \multicolumn{1}{c|}{$31.72\pm0.09$} & \multicolumn{1}{c|}{$28.52\pm0.15$}  & $\textbf{38.31}\pm\textbf{0.10}$ \\
                            &                            & $a = 1.0$                                                                               & \multicolumn{1}{c|}{$31.40\pm0.09$} & \multicolumn{1}{c|}{$28.72\pm0.14$}  & $\textbf{38.22}\pm\textbf{0.21}$ \\
                            &                            & $IID$                                                                                   & \multicolumn{1}{c|}{$31.21\pm0.06$} & \multicolumn{1}{c|}{$28.19\pm0.09$}  & $\textbf{38.61}\pm\textbf{0.08}$ \\ \cline{2-6} 
                            & \multirow{4}{*}{VGG16}     & $a = 0.1$                                                                               & \multicolumn{1}{c|}{$33.24\pm0.76$} & \multicolumn{1}{c|}{$30.55\pm0.42$} & $\textbf{44.04}\pm\textbf{0.44}$ \\
                            &                            & $a = 0.5$                                                                               & \multicolumn{1}{c|}{$31.22\pm0.19$} & \multicolumn{1}{c|}{$27.85\pm0.21$}  & $\textbf{44.34}\pm\textbf{0.18}$ \\
                            &                            & $a = 1.0$                                                                               & \multicolumn{1}{c|}{$31.90\pm0.17$} & \multicolumn{1}{c|}{$28.31\pm0.28$}  & $\textbf{44.49}\pm\textbf{0.13}$ \\
                            &                            & $IID$                                                                                   & \multicolumn{1}{c|}{$32.21\pm0.05$} & \multicolumn{1}{c|}{$29.48\pm0.09$} & $\textbf{44.60}\pm\textbf{0.15}$ \\ \cline{2-6} 
                            & \multirow{4}{*}{mobileNet} & $a = 0.1$                                                                               & \multicolumn{1}{c|}{$16.69\pm0.24$} & \multicolumn{1}{c|}{$15.40\pm0.21$} & $\textbf{21.15}\pm\textbf{0.13}$ \\
                            &                            & $a = 0.5$                                                                               & \multicolumn{1}{c|}{$18.36\pm0.10$} & \multicolumn{1}{c|}{$15.51\pm0.05$}  & $\textbf{23.31}\pm\textbf{0.09}$ \\
                            &                            & $a = 1.0$                                                                               & \multicolumn{1}{c|}{$18.24\pm0.10$} & \multicolumn{1}{c|}{$16.24\pm0.09$}  & $\textbf{23.30}\pm\textbf{0.09}$ \\
                            &                            & $IID$                                                                                   & \multicolumn{1}{c|}{$19.10\pm0.07$} & \multicolumn{1}{c|}{$16.74\pm0.10$}  & $\textbf{23.17}\pm\textbf{0.09}$ \\ \hline
\multirow{12}{*}{ImageNet} & \multirow{4}{*}{ResNet8}   & $a = 0.1$                                                                               & \multicolumn{1}{c|}{$35.94\pm0.14$}  & \multicolumn{1}{c|}{$36.43\pm0.21$}  &  $\textbf{37.10}\pm\textbf{0.17}$ \\
                            &                            & $a = 0.5$                                                                               & \multicolumn{1}{c|}{$38.53\pm0.16$} & \multicolumn{1}{c|}{$39.04\pm0.13$}  & $\textbf{39.83}\pm\textbf{0.78}$ \\
                            &                            & $a = 1.0$                                                                               & \multicolumn{1}{c|}{$39.21\pm0.19$} & \multicolumn{1}{c|}{$40.35\pm0.08$}  &  $\textbf{41.61}\pm\textbf{0.28}$ \\
                            &                            & $IID$                                                                                   & \multicolumn{1}{c|}{$39.01\pm0.13$} & \multicolumn{1}{c|}{$39.80\pm0.09$}  &  $\textbf{41.83}\pm\textbf{0.15}$ \\ \cline{2-6} 
                            & \multirow{4}{*}{VGG16}     & $a = 0.1$                                                                               & \multicolumn{1}{c|}{$33.52\pm0.66$} & \multicolumn{1}{c|}{$36.42\pm0.90$} & $\textbf{39.92}\pm\textbf{0.22}$ \\
                            &                            & $a = 0.5$                                                                               & \multicolumn{1}{c|}{$36.79\pm0.96$} & \multicolumn{1}{c|}{$37.92\pm1.20$}  & $\textbf{39.83}\pm\textbf{0.79}$ \\
                            &                            & $a = 1.0$                                                                               & \multicolumn{1}{c|}{$37.17\pm0.53$} & \multicolumn{1}{c|}{$37.40\pm0.40$}  & $\textbf{40.69}\pm\textbf{0.55}$ \\
                            &                            & $IID$                                                                                   & \multicolumn{1}{c|}{$37.41\pm0.41$} & \multicolumn{1}{c|}{$37.22\pm0.23$} &  $\textbf{40.92}\pm\textbf{0.36}$ \\ \cline{2-6} 
                            & \multirow{4}{*}{mobileNet} & $a = 0.1$                                                                               & \multicolumn{1}{c|}{$30.94\pm0.32$} & \multicolumn{1}{c|}{$27.74\pm0.44$} & $\textbf{31.34}\pm\textbf{0.58}$ \\
                            &                            & $a = 0.5$                                                                               & \multicolumn{1}{c|}{$35.09\pm0.09$} & \multicolumn{1}{c|}{$31.72\pm0.19$}  &  $\textbf{37.41}\pm\textbf{0.12}$   \\
                            &                            & $a = 1.0$                                                                               & \multicolumn{1}{c|}{$34.75\pm0.11$} & \multicolumn{1}{c|}{$32.29\pm0.12$}  & $\textbf{37.95}\pm\textbf{0.10}$  \\
                            &                            & $IID$                                                                                   & \multicolumn{1}{c|}{$35.41\pm0.08$} & \multicolumn{1}{c|}{$33.68\pm0.06$}  &  $\textbf{38.06}\pm\textbf{0.08}$ \\ \hline
\multirow{3}{*}{FEMNIST}    & ResNet8                    & $-$                                                                                     & \multicolumn{1}{c|}{$80.22\pm0.07$} & \multicolumn{1}{c|}{$79.46\pm0.11$}  & $\textbf{82.01}\pm\textbf{0.13}$ \\
                            & VGG16                      & $-$                                                                                     & \multicolumn{1}{c|}{$82.99\pm0.16$} & \multicolumn{1}{c|}{$82.03\pm0.11$}  & $\textbf{84.28}\pm\textbf{0.22}$ \\
                            & mobileNet                  & $-$                                                                                     & \multicolumn{1}{c|}{$72.99\pm0.15$} & \multicolumn{1}{c|}{$73.37\pm0.14$}  & $\textbf{79.75}\pm\textbf{0.14}$ \\ \hline
\end{tabular}
}
\end{table*}

\subsection{Performance Comparison}
1) \emph{Comparison of Accuracy}: Table \ref{test acc} compares the test accuracy of our method and all baselines at IID and non-IID settings. In this table, the first column indicates the dataset type; The second column represents the model used for training; The third column denotes the data heterogeneity settings of clients, which specify different distributions for client data. The fourth column has three sub-columns that list the test accuracy information and its standard deviation for each of the three FL methods. From the table, we find that our method achieves the best accuracy in all cases. Noting that SFL is actually equivalent to FedAvg, so they have similar accuracy. Compared to SFL, the test accuracy of our approach is up to 16.5\% higher on the CIFAR-100 dataset with the VGG16 model when $a= 0.5$. We also find that $S^2FL$ was able to achieve higher accuracy on VGG16 than Resnet8 and mobileNet. This is because VGG16 has a more complex network structure compared to Resnet8 and mobileNet.All FL methods achieve a good performance on VGG16 but SFL still achieves the highest inference accuracy compared to other baselines.
Figure \ref{test_acc} shows the training process of $S^2FL$ and two baselines on CIFAR-10 based on the VGG16 model. We can find that our approach achieves the highest accuracy and the fastest convergence and has a more stable learning curve in both no-IID and IID scenarios.

2) \emph{Comparison of Training Convergence Speed and Communication Overhead}: We consider the time and communication overhead required to achieve a certain accuracy during training. Table \ref{train time and communication overhead} compares our method with two baselines, all of them training VGG16 on the CIFAR-10 dataset. As can be seen from the table, when training the large model (such as VGG16), SFL reduces the training time of the device by letting the device train a model portion, and because the output feature size of the model portion is smaller than the parameters of the entire model, $SFL$ has a faster convergence speed and smaller communication overhead than $FL$. Our approach mitigates the problem of straggles in $SFL$ and reduces the training time and communication overhead, $S^2FL$ outperforms $SFL$ by 3.54X and 2.57X in terms of training time and communication overhead when $a = 0.5$.

\begin{table*}[h]
\centering
\caption{Comparison of training time and communication overhead}
\label{train time and communication overhead}
\scalebox{1.0}{
\begin{tabular}{|c|c|cc|cc|cc|}
\hline
\multirow{2}{*}{\begin{tabular}[c]{@{}c@{}}Hetero.   \\      Settings\end{tabular}} & \multirow{2}{*}{Acc.\%} & \multicolumn{2}{c|}{FedAvg}          & \multicolumn{2}{c|}{SFL}             & \multicolumn{2}{c|}{$S^2FL$}         \\ \cline{3-8} 
                                                                                          &                            & \multicolumn{1}{c|}{Time}   & Comm.  & \multicolumn{1}{c|}{Time}   & Comm.  & \multicolumn{1}{c|}{Time}   & Comm.  \\ \hline\hline
\multirow{2}{*}{$a = 0.1$}                                                                & 40                         & \multicolumn{1}{c|}{21299s} & 29936M & \multicolumn{1}{c|}{13763s} & 12224M & \multicolumn{1}{c|}{\textbf{8471s}}  & \textbf{11372M} \\
                                                                                          & 50                         & \multicolumn{1}{c|}{41436s} & 61083M & \multicolumn{1}{c|}{35294s} & 32999M & \multicolumn{1}{c|}{\textbf{19471s}} & \textbf{27599M} \\ \hline
\multirow{2}{*}{$a = 0.5$}                                                                & 50                         & \multicolumn{1}{c|}{8162s}  & 19655M & \multicolumn{1}{c|}{7612s}  & 11958M & \multicolumn{1}{c|}{\textbf{2338s}}  & \textbf{5142M}  \\
                                                                                          & 60                         & \multicolumn{1}{c|}{21320s} & 51104M & \multicolumn{1}{c|}{16022s} & 25300M & \multicolumn{1}{c|}{\textbf{4524s}}  & \textbf{9826M}  \\ \hline
\multirow{2}{*}{$a = 1.0$}                                                                & 55                         & \multicolumn{1}{c|}{10498s} & 24191M & \multicolumn{1}{c|}{9329s}  & 14378M & \multicolumn{1}{c|}{\textbf{2908s}}  & \textbf{6381M}  \\
                                                                                          & 60                         & \multicolumn{1}{c|}{15162s} & 34472M & \multicolumn{1}{c|}{10058s} & 15423M & \multicolumn{1}{c|}{\textbf{3921s}}  & \textbf{8512M}  \\ \hline
\multirow{2}{*}{$IID$}                                                                    & 60                         & \multicolumn{1}{c|}{10614s} & 26913M & \multicolumn{1}{c|}{8963s}  & 15423M & \multicolumn{1}{c|}{\textbf{2952s}}  & \textbf{6823M}  \\
                                                                                          & 65                         & \multicolumn{1}{c|}{16020s} & 40520M & \multicolumn{1}{c|}{13422s} & 23047M & \multicolumn{1}{c|}{\textbf{3967s}}  & \textbf{9151M}  \\ \hline
\end{tabular}
}
\end{table*}

\subsection{Impacts of Different Configurations}
To demonstrate the generalizability and scalability of our approach in various scenarios, we examined the impact of different configurations on $S^2FL$ in the following three ways: different numbers of participating devices, different combinations of devices, and different size of client set.

1)\emph{Impacts of the number of participating devices}. We explored the impact of the number of devices $x$ participating in each round of training. We conducted experiments on four cases where $x$ is equal to 5, 10, 15, and 20. Figure \ref{impact1} shows the training process conducted on CIFAR-10 with IID distribution using VGG16. We observed that our approach achieved the highest accuracy across all cases, and as the number of devices increased, our approach had a higher accuracy improvement than other baselines. Furthermore, our method can achieve faster convergence when the number of devices involved in each round of training increases. This is because the more devices involved in training, the higher the probability that low FLOPS and low transfer rate devices will be selected in each round, which causes the straggler problem in FL and SFL. Our approach solves this problem by distributing the model adaptively for each device.

\begin{figure}[h]
    \centering
    \subfloat[$x=5$]{
    \includegraphics[width=0.23\textwidth]{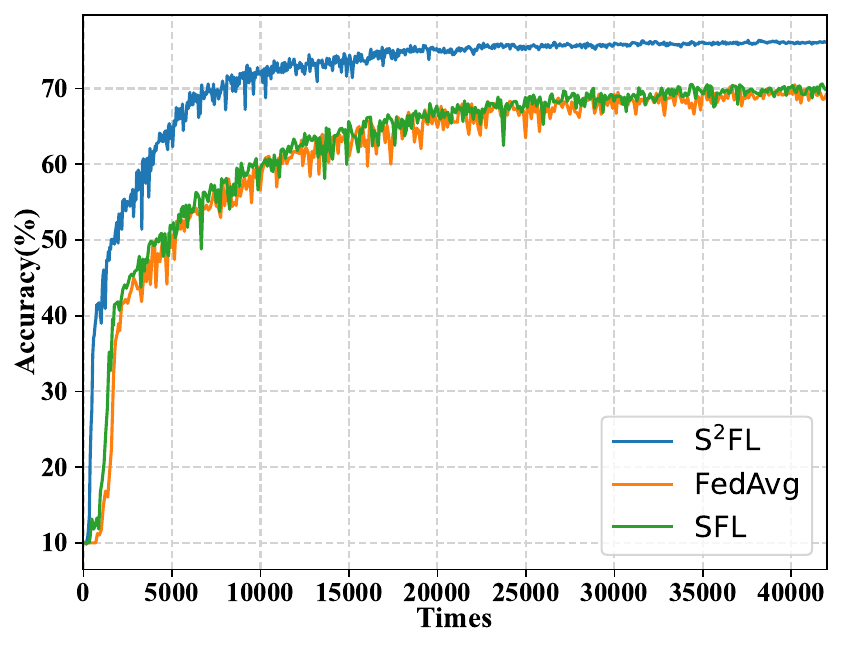}
        \label{fig:impact1_subfig1}
    }
    \subfloat[$x=10$]{
    \includegraphics[width=0.23\textwidth]{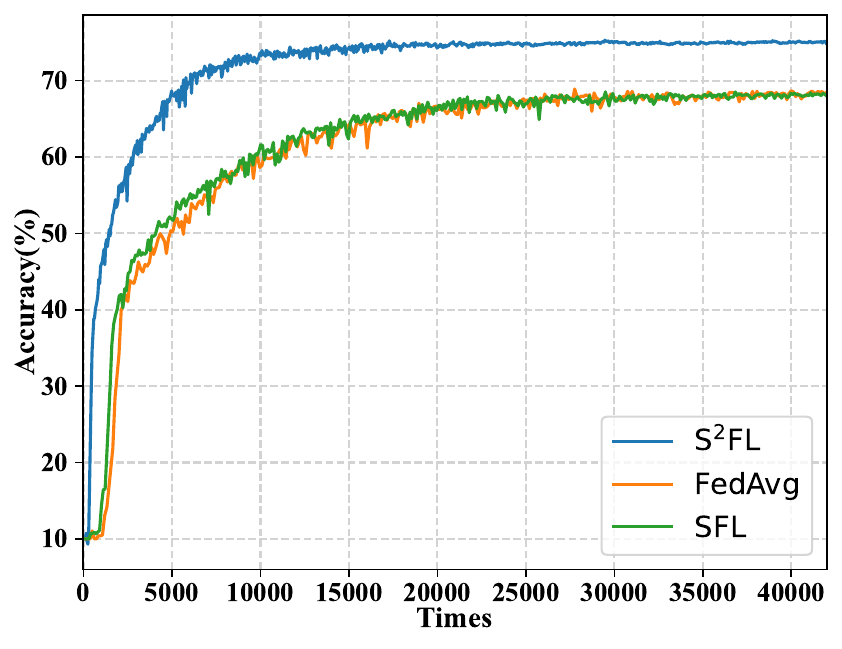}
        \label{fig:impact1_subfig2}
    }
    \hfill
    \subfloat[$x=15$]{
    \includegraphics[width=0.23\textwidth]{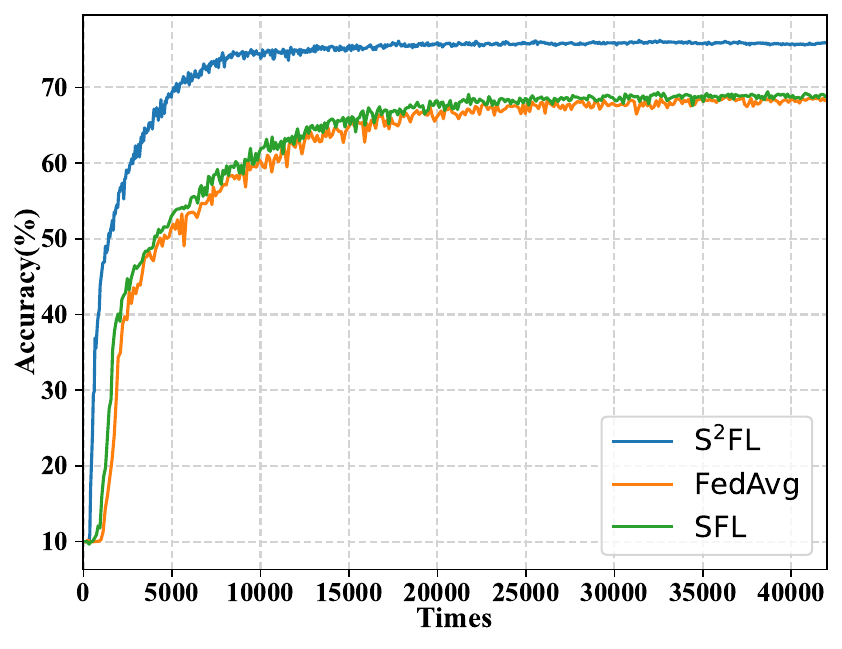}
        \label{fig:impact1_subfig3}
    }
    \subfloat[$x=20$]{
    \includegraphics[width=0.23\textwidth]{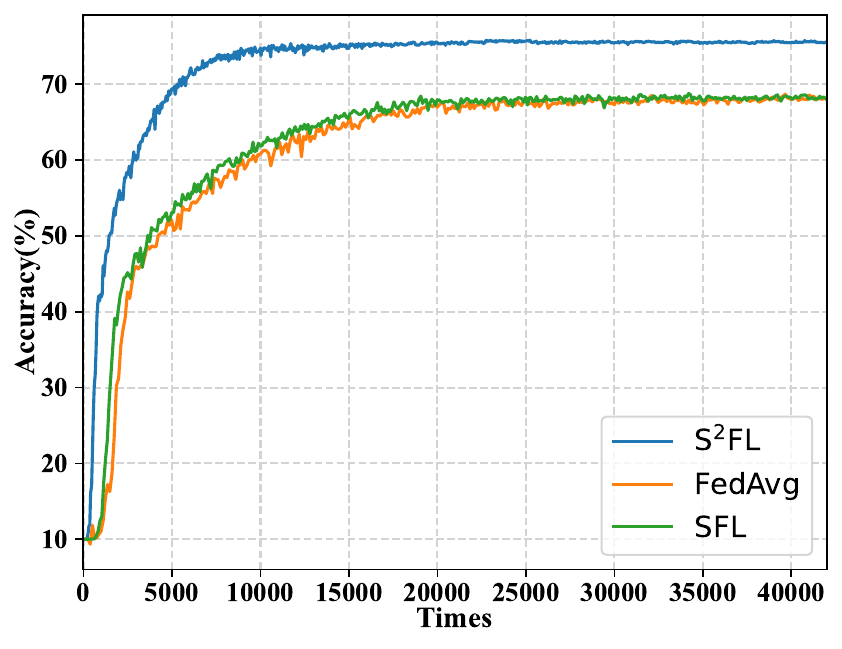}
        \label{fig:impact1_subfig4}
    }
    \caption{Training processes with different numbers of devices}
    \label{impact1} 
\end{figure}

\begin{figure}[h]
    \centering
    \subfloat[$conf 1$]{
    \includegraphics[width=0.23\textwidth]{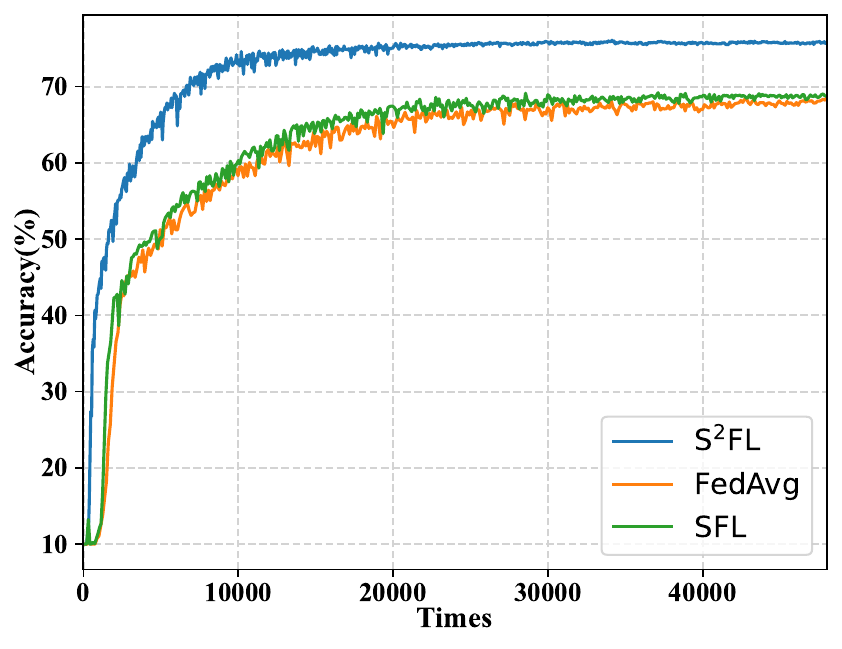}
        \label{fig:impact2_subfig1}
    }
    \subfloat[$conf 2$]{
    \includegraphics[width=0.23\textwidth]{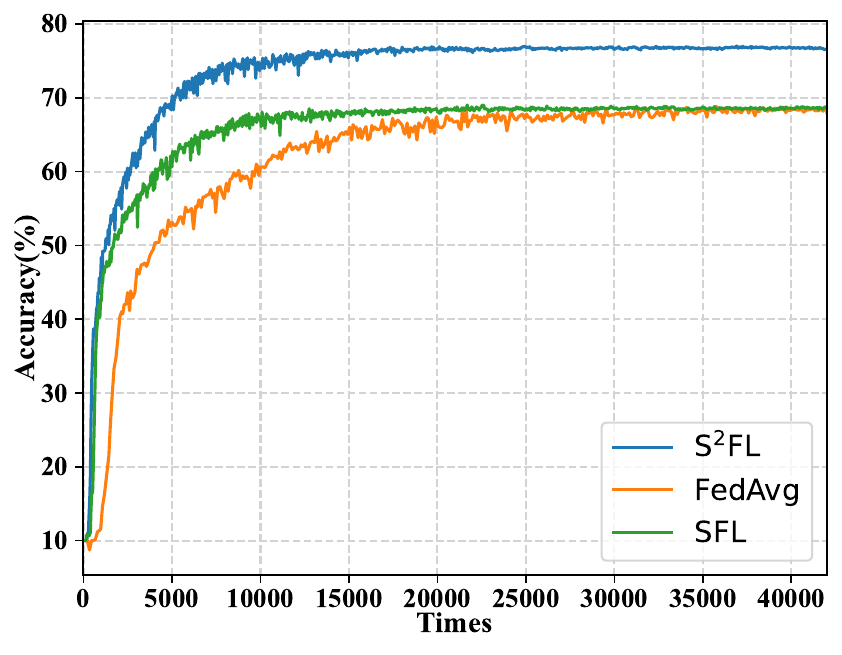}
        \label{fig:impact2_subfig2}
    }
    \caption{Training processes with different device compositions}
    \label{impact2}
\end{figure}

2) \emph{Impacts of Device Compositions}. We consider the effectiveness of our approach in different device composition scenarios. We used two different configurations in our experiment: \emph{conf 1}),  High: Mid: Low = 5: 3: 2 in both FLOPS settings and transfer rate settings. \emph{conf 2}), High: Mid: Low = 2: 3: 5 in both FLOPS settings and transfer rate settings. Figure \ref{impact2} shows the training process conducted on CIFAR-10 with IID distribution using VGG16. We observed that when the proportion of low FLOPS and low transfer rate devices increased, $S^2FL$ and $SFL$ achieved faster convergence than $FL$. This is because, for low FLOPS and low transfer rate devices, it is very time-consuming to train and transfer the complete model independently; $S^2FL$ and $SFL$ make these devices responsible for only a small portion of the model, thus speeding up convergence. It can be seen the $S^2FL$ achieves the highest accuracy and fastest convergence in both configurations. 

3) \emph{Impacts of the Size of Client Set}. We explore the effectiveness of our approach in different client set size while keeping the sampling rate constant (constant at 0.1) and the device composition. We conducted experiments on four cases where the client set size $|C|$ is equal to 20, 50, 100, 150. Figure 7 shows the training process conducted on CIFAR-10 with no-IID distribution($a= 0.5$) using VGG16. We observe that as the size of the client set increases, the improvement in accuracy and convergence speed of our approach compared to the baseline will also expand. This is because when keeping $a=0.5$, a larger set of clients means that individual clients have less data, making the difference in data distribution between clients larger. Thus, the accuracy of the two baselines of FedAvg and SFL decreases with the increase of $|C|$, while our method mitigates the accuracy degradation due to data heterogeneity through the \emph{Data Balance-based Training Mechanism}. Our method can achieve greater advantages in scenarios with more client devices and more heterogeneous data. Meanwhile, since the device composition and client sampling rate are unchanged, more low-performance devices are selected to participate in training when the client set becomes larger. This leads to slower convergence of $FL$ and $SFL$, while $S^2FL$ is almost unaffected.

\begin{figure}[h]
    \centering
    \subfloat[$|C|=20$]{
    \includegraphics[width=0.23\textwidth]{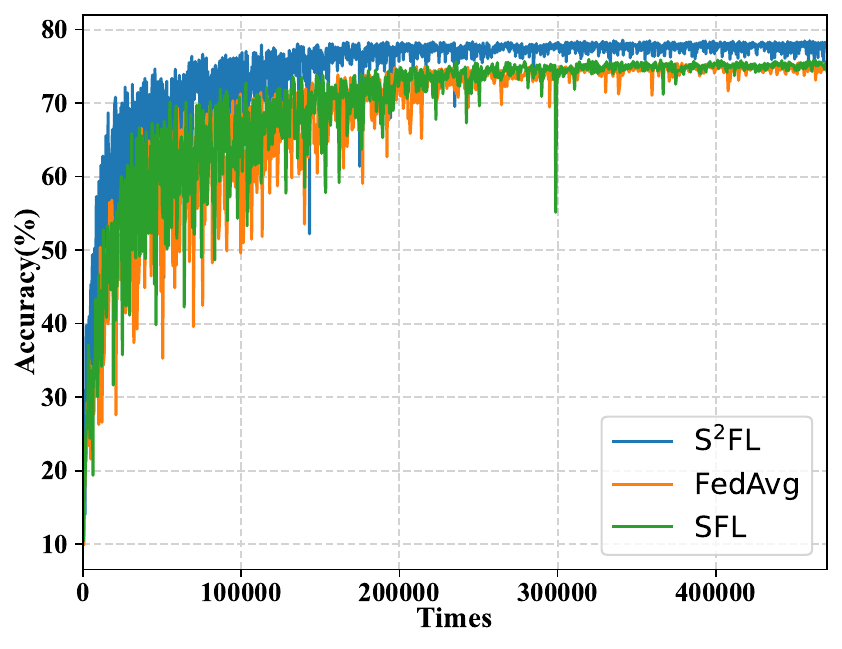}
        \label{fig:impact3_subfig1}
    }
    \subfloat[$|C|=50$]{
    \includegraphics[width=0.23\textwidth]{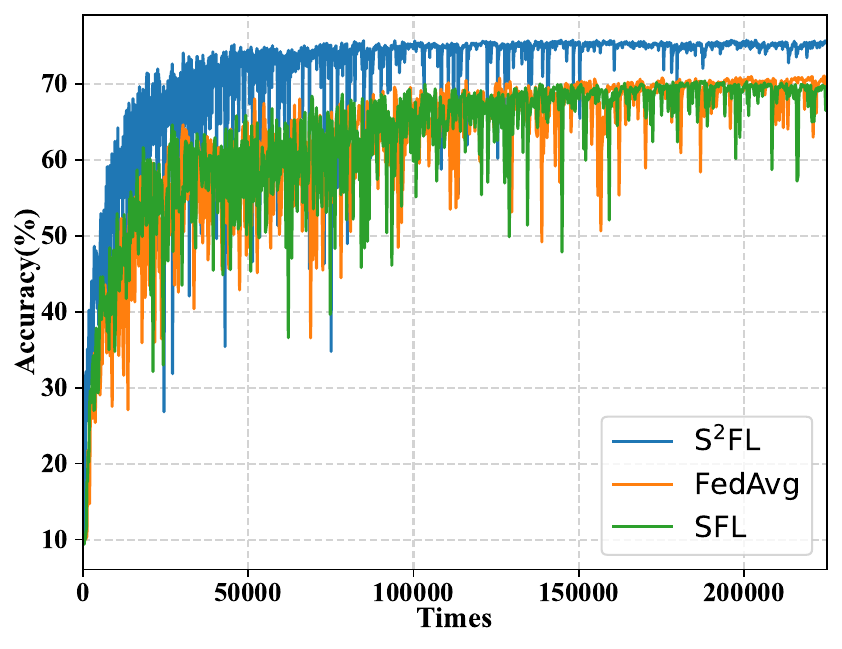}
        \label{fig:impact3_subfig2}
    }
    \hfill
    \subfloat[$|C|=100$]{
    \includegraphics[width=0.23\textwidth]{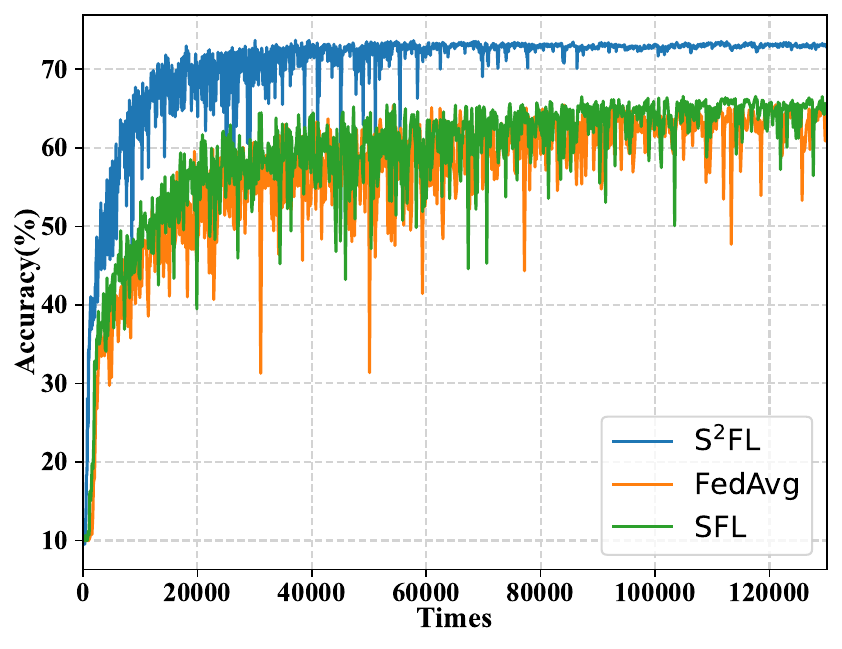}
        \label{fig:impact3_subfig3}
    }
    \subfloat[$|C|=150$]{
    \includegraphics[width=0.23\textwidth]{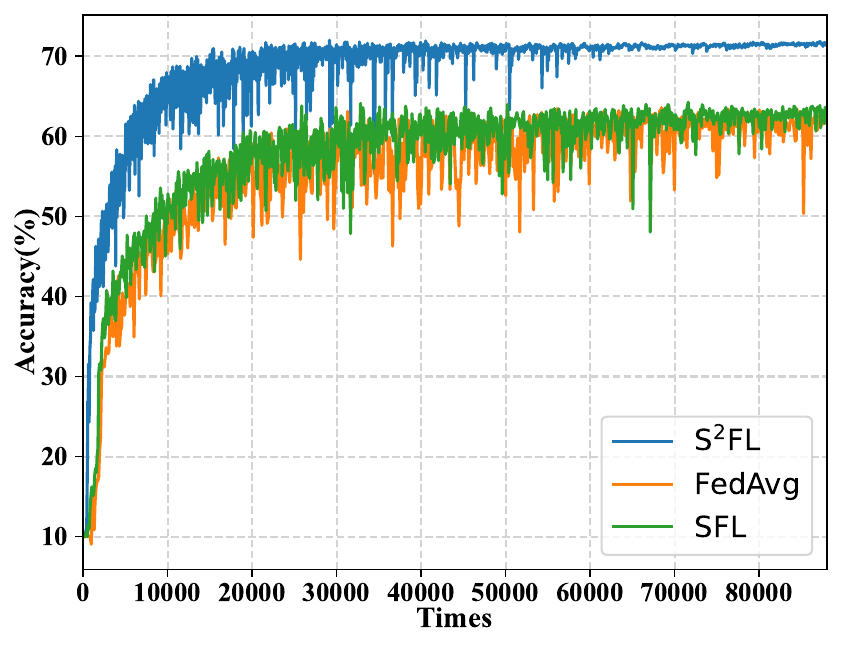}
        \label{fig:impact3_subfig4}
    }
    \label{impact3}
    \caption{Training processes with different size of client set}
\end{figure}

\subsection{Ablation Study}
To demonstrate the effectiveness of our adaptive sliding model split strategy and the data balance-based training mechanism, we conducted experiments on four different configurations of $S^2FL$: 1) $S^2FL + R$, which does not use the adaptive sliding model split strategy and the data balance-based training mechanism in $S^2FL$, note that this configuration is equivalent to SFL. 2) $S^2FL + B$ that uses the data balance-based training mechanism. 3) $S^2FL + M$ that uses the adaptive sliding model split strategy. 3) $S^2FL + MB$ that enables both the adaptive sliding model split strategy and the data balance-based training mechanism. Figure \ref{ablation} shows the ablation study results on the CIFAR-10 dataset with VGG16 following IID distribution. We can observe that $S^2FL + MB$ achieves the highest accuracy rate across all configurations. $S^2FL + M$ has a faster convergence speed than $S^2FL + R$, which indicates that our adaptive sliding model split strategy can effectively synchronize the training time between devices and speed up the training speed. $S^2FL + B$ has a higher test accuracy than $S^2FL + R$, which indicates that our data balance-based training mechanism can alleviate the problem of inference accuracy degradation caused by data heterogeneity.

\begin{figure}[h]
    \centering
    \includegraphics[width=0.95\linewidth]{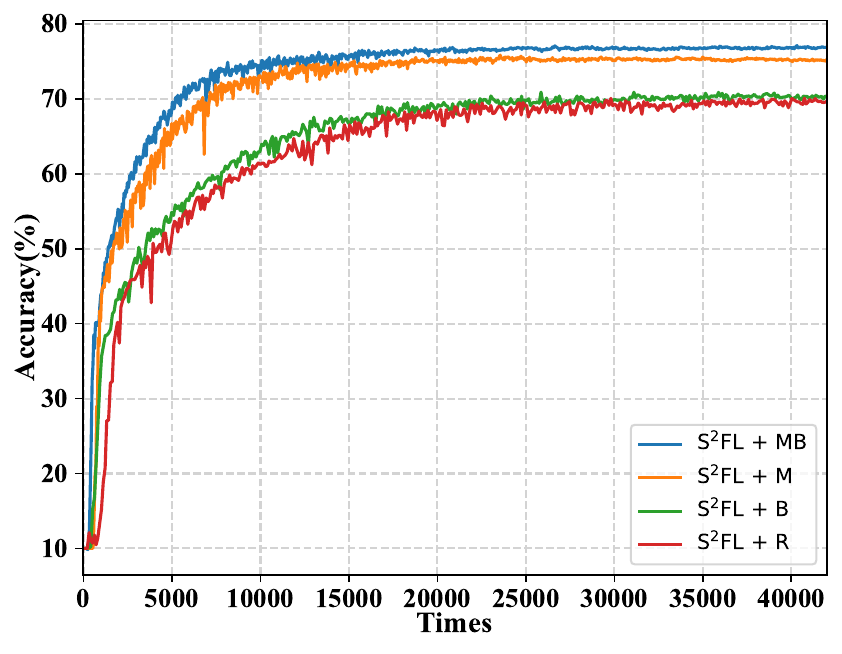}
    \caption{Ablation study in $S^2FL$}
    \label{ablation}
\end{figure} 


\section{Conclusion}
Although FL is increasingly used in AIoT scenarios, it still has the problem that resource-constrained devices cannot support large-size models. Some existing solutions such as SFL do not consider the problem of stragglers and data heterogeneity. To address this problem, we propose an effective and accurate approach called S$^2$FL based on SFL. It uses our proposed adaptive sliding model split strategy to select the model portion for the device that matches its computing resources so that the training time is similar between the devices. At the same time, S$^2$FL uses the data balance-based training mechanism to group devices, letting the model be trained on a more uniform data distribution. Comprehensive experiments show that our approach can achieve faster convergence speed and higher inference accuracy compared with state-of-the-art methods considering various uncertainties.






\bibliographystyle{ACM-Reference-Format}
\bibliography{sample-base}

\end{document}